\def\eqref#1{equation~\ref{#1}}
\def\1{\bm{1}}
\DeclareMathAlphabet{\mathsfit}{\encodingdefault}{\sfdefault}{m}{sl}
\SetMathAlphabet{\mathsfit}{bold}{\encodingdefault}{\sfdefault}{bx}{n}
\def\gN{{\mathcal{N}}}
\DeclareMathOperator*{\argmax}{arg\,max}
\DeclareMathOperator*{\argmin}{arg\,min}
\newtheorem{lemma}{Lemma}%[section]
\newtheorem{proposition}{Proposition}%[section]
\newcommand{\fig}[1]{Fig.~\ref{#1}}
\newcommand{\eq}[1]{Eq.~(\ref{#1})}
\newcommand{\tb}[1]{Tab.~\ref{#1}}
\newcommand{\se}[1]{Section~\ref{#1}}
\newcommand{\ap}[1]{Appendix~\ref{#1}}
\newcommand{\prop}[1]{Proposition~\ref{#1}}
\newcommand{\alg}[1]{Algo.~\ref{#1}}
\DeclareMathOperator{\rl}{RL}
\DeclareMathOperator{\irl}{IRL} 
\newcommand*{\dif}{\mathop{}\!\mathrm{d}}
\newcommand{\bbE}{\ensuremath{\mathbb{E}}} % Expect
\newcommand{\bbR}{\ensuremath{\mathbb{R}}} % Real Numbers
\newcommand{\caA}{\ensuremath{\mathcal{A}}} % Action
\newcommand{\caS}{\ensuremath{\mathcal{S}}} % State
\newcommand{\caM}{\ensuremath{\mathcal{M}}} % Model
\newcommand{\caD}{\ensuremath{\mathcal{D}}} 
\newcommand{\caL}{\ensuremath{\mathcal{L}}} 
\newcommand{\kld}{\text{D}_{\text{KL}}} 
\newcommand{\piE}{{\pi_E}}
\newcommand{\hr}{\hat{r}}
\title{Energy-Based Imitation Learning}
\author{Minghuan Liu, Tairan He, Minkai Xu, Weinan Zhang$^{\dagger}$}
\affiliation{%
  \institution{Shanghai Jiaotong University}
  \city{Shanghai, China}
}
\email{{minghuanliu, whynot, wnzhang}@sjtu.edu.cn, mkxu@apex.sjtu.edu.cn}
\thanks{$\dagger$ Corresponding author}
\begin{abstract}
We tackle a common scenario in imitation learning (IL), where agents try to recover the optimal policy from expert demonstrations without further access to the expert or environment reward signals. Except the simple Behavior Cloning (BC) that adopts supervised learning followed by the problem of compounding error, previous solutions like inverse reinforcement learning (IRL) and recent generative adversarial methods involve a bi-level or alternating optimization for updating the reward function and the policy, suffering from high computational cost and training instability. Inspired by recent progress in energy-based model (EBM), in this paper, we propose a simplified IL framework named Energy-Based Imitation Learning (EBIL). Instead of updating the reward and policy iteratively, EBIL breaks out of the traditional IRL paradigm by a simple and flexible two-stage solution: first estimating the expert energy as the surrogate reward function through score matching, then utilizing such a reward for learning the policy by reinforcement learning algorithms. EBIL combines the idea of both EBM and occupancy measure matching, and via theoretic analysis we reveal that EBIL and Max-Entropy IRL (MaxEnt IRL) approaches are two sides of the same coin, and thus EBIL could be an alternative of adversarial IRL methods. Extensive experiments on qualitative and quantitative evaluations indicate that EBIL is able to recover meaningful and interpretative reward signals while achieving effective and comparable performance against existing algorithms on IL benchmarks.
\end{abstract}
\begin{document}

\keywords{Imitation Learning, Inverse Reinforcement Learning, Energy-Based Modeling}

%%%%%%%%%%%%%%%%%%%%%%%%%%%%%%%%%%%%%%%%%%%%%%%%%%%%%%%%%%%%%%%%%%%%%%%%

%%% Include any author-defined commands here.
         
\newcommand{\BibTeX}{\rm B\kern-.05em{\sc i\kern-.025em b}\kern-.08em\TeX}

\pagestyle{fancy}
\fancyhead{}

\maketitle

\section{Introduction}

%Reinforcement Learning (RL) has demonstrated its effectiveness in a variety of tasks~\cite{mnih2015human,openai2018openaifive,jaderberg2018human}. However, a common challenge for RL methods is their severe dependency on a well-designed reward signal.
% Fortunately, the reward signals will be unnecessary if we consider to Learn from Demonstrations (LfD), or commonly known as Imitation Learning (IL)~\cite{hussein2017imitation}. In this paper, we focus on the common setting of IL, where agents learn their policies from the sampled trajectories from the expert, without any further access to the expert or explicit rewards. 

Imitation learning (IL)~\cite{hussein2017imitation} allows Reinforcement Learning (RL) agents to learn from demonstrations, without any further access to the expert or explicit rewards. Classic solutions for IL such as behavior cloning (BC)~\cite{pomerleau1991efficient} aim to minimize 1-step deviation error along the provided expert trajectories with supervised learning, which requires an extensive collection of expert data and suffers seriously from compounding error caused by covariate shift ~\cite{ross2010efficient,ross2011reduction} due to the long-term trajectories mismatch when we just clone each single-step action.
%, which is often used for policy initialization~\cite{nagabandi2018neural,rajeswaran2017learning}. 
%However, this approach needs an extensive collection of expert data and suffers seriously from the compounding error caused by covariate shift ~\cite{ross2010efficient,ross2011reduction} due to the \textit{i.i.d.} assumption between the training and test data. 
%In literature, many works have been developed to alleviate the such obstacles. 
As another solution, Inverse Reinforcement Learning (IRL)~\cite{ng2000algorithms,abbeel2004apprenticeship,fu2017learning} tries to recover a reward function from the expert and subsequently train an RL policy under that, yet such a bi-level optimization scheme can result in high computational cost. The recent generative adversarial solution ~\cite{ho2016generative,fu2017learning,ghasemipour2019divergence} takes advantage of GAN~\cite{goodfellow2014generative} to minimize the divergence between the agent's occupancy measure and the expert's while it also inherits the training instability of GAN~\cite{brock2018large}.

Analogous to IL, learning statistical models from given data and generating similar samples has been an important topic in the generative model community. Among them, recent energy-based models (EBMs) have gained much attention because of the simplicity and flexibility in likelihood estimation~\cite{du2019implicit,song2019sliced}. 
In this paper, we propose to leverage the advantages of EBMs to solve IL with a novel but simplified framework called Energy-Based Imitation Learning (EBIL), solving IL in a two-step fashion: first estimates an unnormalized probability density (\textit{a.k.a.} energy) of expert's occupancy measure through score matching, then takes the energy to construct a surrogate reward function as a guidance for the agent to learn the desired policy. 
We realize that EBIL is high related to MaxEnt IRL, and in detail analyze their relation, which reveals that these two methods are two sides of the same coin, and MaxEnt IRL can be seen as a special form of EBIL because MaxEnt IRL estimates the energy and the policy alternately. Therefore, we can think of EBIL as an simplified alternative of adversarial IRL methods.

In experiments, we first verify the effectiveness of EBIL and the meaningful reward recovered in a simple one-dimensional environment by visualizing the estimated reward and the induced policy; then we evaluate our algorithm on extensive high-dimensional continuous control benchmarks, contains sub-optimal expert demonstrations and optimal demonstrations, showing that EBIL can achieve comparable and stable performance against previous adversarial IRL methods. We also show the functionality for resolving state-only imitation learning by matching the target state marginal distribution, and provide evaluations and ablation study on the recovered EBM.

The remainder of this paper is organized as follows. In \se{sec:ebil}, we first present the formulation of the energy-based imitation learning (EBIL) framework, which is a general and principled two-stage RL framework that models the expert policy with an unnormalized probability function (\textit{i.e.}, energy function). In \se{sec:relation}, we give a comprehensive discussion about EBIL and classic IRL methods, which are also built upon the energy-based formulation to model the expert trajectories but typically adopt an adversarial (or alternating) training scheme. The discussion allows us to clarify how to avoid the interactive training of IRL and thus leads to our simplified and principled two-stage algorithm. After that, in \se{sec:energy-estimation}, we practically illustrate how to implement the two-stage energy-based algorithm via score matching method. Lastly, in \se{sec:exps}, through comprehensive experiments from synthetic domain to continuous control tasks, we demonstrate the interpretability and effectiveness of EBIL over existing algorithms.
\section{Background}
We consider a Markov Decision Process (MDP) $\caM = \langle \caS, \caA, P, \rho_0, r,\\ \gamma \rangle$, where $\caS$ is the set of states, $\caA$ represents the action space of the agent, $P: \caS \times \caA \times \caS \rightarrow [0, 1]$ is the state transition probability distribution, $\rho_0: \caS\rightarrow[0,1]$ is the distribution of the initial state $s_0$, and $\gamma\in [0,1]$ is the discounted factor. The agent holds its policy $\pi(a|s): \caS \times \caA \rightarrow [0, 1]$ to make decisions and receive rewards defined as $r: \caS \times \caA \rightarrow \mathbb{R}$. For an arbitrary function $f: \langle s, a \rangle \rightarrow \mathbb{R}$, we denote the expectation w.r.t. the policy $\pi$ as $\bbE_\pi[f(s,a)]\triangleq \bbE_{s_0\sim\rho_0, s_t\sim P, a_t\sim \pi} \left [\sum_{t=0}^\infty \gamma^t f(s_t,a_t) \right ]$. The objective of Maximum Entropy Reinforcement Learning (MaxEnt RL) is required to find a stochastic policy that can maximize its reward along with the entropy~\cite{ho2016generative,haarnoja2017reinforcement} as:
\begin{equation}\label{eq:maxent-rl}
    \pi^* = \argmax_{\pi} \bbE_\pi\left [r(s,a)\right ]+\alpha H(\pi)~,
\end{equation}
where $H(\pi) \triangleq \bbE_{\pi}[- \log \pi(a|s)]$ is the $\gamma$-discounted causal entropy~\cite{bloem2014infinite} and $\alpha$ is the temperature hyperparameter. Throughout this work we denote the occupancy measure $\rho_\pi^{s,a}(s,a)$ or $\rho_\pi^{s}(s)$ as the density of occurrence of states or state-action pairs\footnote{It is important to note that the definition of occupancy measure is not equivalent to the definition of a normalized distribution since in RL we have to deal with the discounted factor for the expectation w.r.t. the policy $\bbE_\pi$.}:
\begin{equation}
\begin{aligned}
    \rho_{\pi}^{s,a}(s,a) &= \sum_{t=0}^{\infty}\gamma^t P(s_t=s, a_t=a|\pi)\\
&= \pi(a|s)\sum_{t=0}^{\infty}\gamma^t P(s_t=s|\pi)
= \pi(a|s)\rho_{\pi}^{s}(s)~,
\end{aligned}
\end{equation}
which allows us to write  $\bbE_{\pi}[\boldsymbol{\cdot}]=\sum_{s,a}\rho_{\pi}^{s,a}(s,a)[\boldsymbol{\cdot} ]=\bbE_{(s,a)\sim \rho_{\pi}^{s,a}}[\boldsymbol{\cdot}]$.
For simplicity, we will denote $\rho_\pi^{s,a}$ as $\rho_\pi$ without further explanation, and $\rho_\pi \in \caD\triangleq\{ \rho_\pi: \pi\in\Pi \}$. 

\paragraph{General Imitation Learning}

Imitation learning (IL) \cite{hussein2017imitation} studies the task of Learning from Demonstrations (LfD), which aims to learn a policy from expert demonstrations. The expert demonstrations typically consist of the expert trajectories interacted with environments without any reward signals. General IL objective tries to minimize the policy distance or the occupancy measure distance\footnote{The equivalence of these two objective usually can be easily shown by the one-to-one correspondence of $\pi$ and $\rho$ and the convexity of $\ell$.}:
\begin{equation}\label{eq:il}
\pi^* = \argmin_\pi \mathbb{E}_{s \sim \rho_{\pi}^{s}}\left [\ell \left (\piE(\cdot | s), \pi(\cdot | s)\right )\right ] = \ell \left (\rho_{\piE}, \rho_{\pi}\right )~,
\end{equation}
where $\ell$ denotes some distance metric.
However, as we do not ask the expert agent for further demonstrations, it is always hard to optimize \eq{eq:il} with only expert trajectories accessible. Thus, Behavior Cloning (BC)~\cite{pomerleau1991efficient} provides a straightforward method by learning the policy in a supervised way, where the objective is represented as a Maximum Likelihood Estimation (MLE):
\begin{equation}
\hat{\pi}^* = \argmin_\pi \mathbb{E}_{s \sim \rho_{\pi_E}^s}\left [\ell \left (\piE(\cdot | s), \pi(\cdot | s)\right )\right ]~,
\end{equation}
which suffers from covariate shift problem \cite{ho2016generative} for the i.i.d. state assumption.

\paragraph{Maximum-Entropy Inverse Reinforcement Learning}
Another branch of methods is Inverse Reinforcement Learning (IRL) \cite{ng2000algorithms} that tries to recover the reward function $r^*$ in the environment, with underlying assumption that expert policy is optimal under some reward function $r^*$. A typically maximum-entropy (MaxEnt) solution~\cite{ziebart2008maximum} models the expert trajectories with a Boltzmann distribution\footnote{Note that \eq{eq:irl} is formulated under the deterministic MDP setting. A general form for stochastic MDP is derived in \cite{ziebart2008maximum,ziebart2010modeling} yet owns similar analysis: the probability of a trajectory is decomposed as the product of conditional probabilities of the states $s_t$, which can factor out of all likelihood ratios since they are not affected by the reward function.}:
\begin{equation}\label{eq:irl}
p_\theta(\tau)=\frac{1}{Z}\exp{(r_\theta(\tau))}~,
\end{equation}
where $\tau=\{s_1,a_1,\dots, s_t, a_t \}$ is a trajectory with horizon $t$, $r_\theta(\tau)=\sum_t \gamma^t r_{\theta}(s_t,a_t)$ is a parameterized reward function, and the partition function $Z\triangleq \int\exp{(r_\theta(\tau))} \dif\tau$ is the integral over all trajectories. IRL algorithms suffer from the computational challenge in estimating the partition function $Z$ and the bi-level optimization by alternating between updating the cost function and an optimal policy w.r.t the current cost function with RL.
 
Derived from MaxEntIRL, Generative Adversarial Imitation Learning (GAIL) \cite{ho2016generative} shows that the objective of MaxEntIRL is a dual problem of occupancy measure matching, and thus can be solved through generative models such as GAN. Specifically, it shows that the policy learned by RL on the reward recovered by IRL can be characterized by
\begin{equation}\label{eq:rl-irl}
    \begin{aligned}
    \rl \circ \irl_\psi(\pi_E) = \argmin_\pi - H(\pi) + \psi^*(\rho_\pi - \rho_{\pi_E})~,
    \end{aligned}
\end{equation}
where $\psi$ is the regularizer, and $f^*:\bbR^{\caS\times\caA}\rightarrow\overline{\bbR}$ is the convex conjugate for an arbitrary function $f:\bbR^{\caS\times\caA}\rightarrow \overline{\bbR}$ given by $f^*(x)=\sup_{y\in\bbR^{\caS\times\caA}}x^Ty - f(y)$. 
\eq{eq:rl-irl} shows that various settings of $\psi$ can be seen as a distance metric leading to various solutions of IL. For example, in \cite{ho2016generative} they choose a special form of $\psi$ so that the second term becomes minimizing the JS divergence and \cite{ghasemipour2019divergence} further shows the second term can be any $f$-divergence measure of $\rho$ and $\rho_E$.

\paragraph{Energy-Based Models}
\label{subsec:ebm}
The energy term is originally borrowed from statistical physics, where it is employed for describing the distribution of the atoms or molecules. Low-energy states correspond to the high probability of occurring, i.e., the local minima of this function are usually related to the stable stationary states. Such a property is appropriate for modeling the density of a data distribution, where high-density should be assigned with lower energy, and higher energy otherwise. Formally, for a random variable $X\sim p(x)$, an Energy-Based Model (EBM) \cite{lecun2006tutorial} builds the density of data by estimating the energy function $E(x)$ with sample $x$ as a Boltzmann distribution:
\begin{equation}
    \begin{aligned}
        p(x) = \frac{1}{Z}\exp(-E(x))~,
    \end{aligned}
\end{equation}
where $Z=\int\exp(-E(x))\dif x$ is the partition function, which is normally intractable to compute exactly for
high-dimensional $x$. As shown in \fig{fig:energy-example}, the energy function $E$ can be seen as the unnormalized log-density of data which is always optimized to maximize the likelihood of the data. Therefore, we can model the expert demonstrations (state-action pairs) with the energy function, where low energy corresponds to the state-action pairs that the expert mostly perform. Typically, the estimation of the partition function $Z$ is computationally expensive, which requires sampling from the Boltzmann distribution $p(x)$ within the inner loop of learning. However, many researchers have shown easier ways to estimate energy with much more efficiency without estimating the partition function. 
\begin{figure}[htbp]
\centering
\includegraphics[width=0.65\columnwidth]{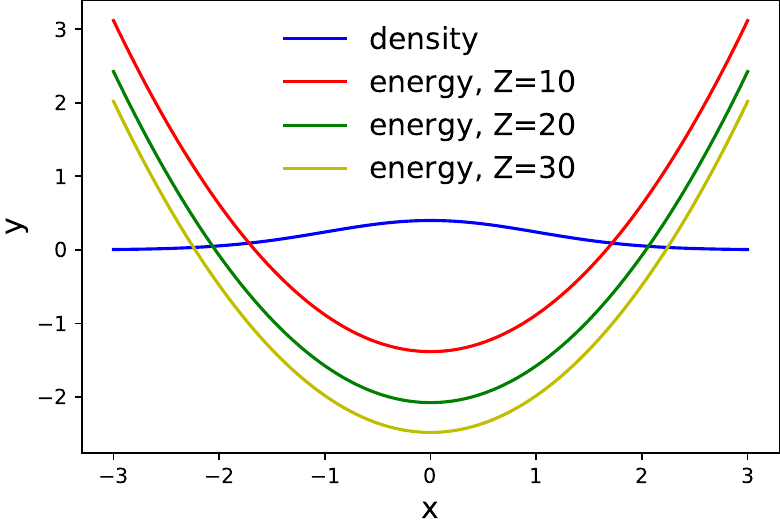}
\vspace{-9pt}
\caption{An example of the density and the energy. The blue line represent the density of $x\sim\mathcal{N}(0,1)$, and the other lines denote the energy of different partition function $Z$.}
\vspace{-4pt}
\label{fig:energy-example}
\end{figure}
% Instead, \cite{hyvarinen2005estimation,kingma2010regularized} show that estimating the EBM from samples of $X$ can be achieved with a score matching objective 
% \begin{equation}
%     \mathcal{L} = \int p(x) \|\nabla_x E(x)-\nabla_x \log p(x) \|^2 dx
% \end{equation}
% However, in practice, this method suffers from the computation complexity of the Hessian Matrix and the approximation problems.
\section{Energy-Based Imitation Learning}
\label{sec:ebil}
We begin by discussing a specific form of IL objectives. Since the different choices of $\psi$ in \eq{eq:rl-irl} lead to different distance metrics to solve IL, and by any kind of $f$-divergence $D_{f}(\rho_{\piE}\|\rho_{\pi})$ \cite{ghasemipour2019divergence}  corresponds to one kind of form of $\psi$ , we can let $\psi=\bbE_{\pi_E} [-1-\log(r(s, a))+r(s, a)]$ and we can have a reverse KL divergence objective\footnote{Full derivations can be found in Appendix C and D of \protect\cite{ghasemipour2019divergence} and without loss of generality we replace $c(s,a)$ with $-r(s,a)$.}:
\begin{equation}\label{eq:kl-il}
    \min_{\pi} \psi^*(\rho_\pi - \rho_{\pi_E}) = D_{f}(\rho_{\piE}\|\rho_{\pi}) = \kld(\rho_{\pi}\|\rho_{\piE})
\end{equation}

Before continuing, we first consider to model the normalized expert occupancy measure $\rho_{\piE}$ with Boltzmann distribution using an EBM $E_{\pi_E}(s,a)$:
\begin{equation}\label{eq:ebm}
    (1-\gamma)\rho_\piE(s,a) = \frac{1}{Z}\exp(-E_{\pi_E}(s,a))~.
\end{equation}

Then we take \eq{eq:ebm} into \eq{eq:kl-il} and manipulate trivial algebraic deviations\footnote{Full deviations can be found in \ap{ap:eb-il}.}:
\begin{equation}\label{eq:kl-induce}
    \begin{aligned}
        \kld ( \rho_\pi \| \rho_{\piE})
        &= \sum_{s,a}\rho_\pi\left( \log{\rho_\pi(s,a)} - \log{\rho_\piE(s,a)} \right )\\
        &\leq \bbE_\pi \left[E_\piE(s,a)\right ] - H(\pi) + \text{const}
        ~,
    \end{aligned}
\end{equation}
where $E_\piE$ is the EBM of policy $\piE$. Therefore, to minimize the KL divergence, we can choose to minimize its upper bound:
\begin{equation}\label{eq:eb-il}
\argmin_{\pi} \kld ( \rho_\pi \| \rho_{\piE}) \Rightarrow \argmax_{\pi} \bbE_\pi \left[-E_\piE(s,a)\right ] + H(\pi)~,
\end{equation}
which is exactly the objective of the MaxEnt RL (\eq{eq:maxent-rl}). It is worth noting that if we remove the entropy term to construct a standard RL objective, then it will collapse into minimizing the cross entropy of the occupancy measure rather than the reverse KL divergence.

Such an observation essentially shows us a two-stage imitation learning solution. In the first stage called \textit{energy recovery}, we try to estimate the energy of the expert's occupancy measure, and in the second stage we can utilize the recovered energy as a fixed surrogate reward for any \textit{MaxEnt RL} algorithm. It is worth noting that these learning procedures can be fully separated, i.e., the training of the energy and the policy do not have to be alternate! We call such a formulation as Energy-Based Imitation Learning (EBIL) framework which liberates us from designing complex learning algorithms but focus on the estimation of the expert energy $E_{\piE}(s,a)$, which is free to take recent advances in energy-based modeling. 

Except the way that utilizing the energy as the reward for reinforcement learning, one may notice that there is a simpler way to recover the expert policy directly via Gibbs distribution:
\begin{equation}
    \begin{aligned}
        \pi^*(a|s)&= \frac{\rho_{\piE}(s,a)}{\sum_{a'}\rho_{\piE}(s,a')}=\frac{\frac{1}{Z}\exp(-E_{\piE}(s,a))}{\frac{1}{Z}\sum_{a'}\exp{(-E_{\piE}(s,a'))}}\\
        &=\frac{\exp(-E_{\piE}(s,a))}{\sum_{a'}\exp{(-E_{\piE}(s,a'))}}~.
    \end{aligned}
\end{equation}
This indicates that if we accurately estimate the energy of the expert's occupancy measure, we have recovered the expert policy. However, this may be hard to generalize to continuous or high-dimensional action space but it worth to try in simple discrete domains.

\paragraph{State-only Imitation Learning}

It is also easy to derive an energy based algorithm for state-only imitation learning~\cite{sun2019provably,liu2019state,ghasemipour2019divergence}, where agents can only get the states (or observations) of the expert, by optimizing the following objective:
\begin{equation}
    \pi^* = \argmin_\pi \kld(\rho(s)\|\rho_{\piE}(s))~.
\end{equation}
Modeling the normalized state occupancy using Boltzmann distribution and following similar deviations, we can get an equivalent objective as \eq{eq:eb-il} by replacing the energy of state-action $E(s,a)$ with the energy of states $E(s)$.

In practice, we train the EBM from expert demonstrations as the reward function. Specifically, instead of directly using the estimated energy function $E(s,a)$, we can construct a surrogate reward function $\hat{r}(s,a) = h(-E_{\piE}(s,a))$, where $h(x)$ is a monotonically increasing linear function, which do not change the optimality in \eq{eq:eb-il} and can be specified differently for various environments. The step-by-step EBIL algorithm is presented in \alg{alg:EBIL}, which is simple and straightforward.

\section{Relation to Inverse Reinforcement Learning}
\label{sec:relation}
As shown in \eq{eq:irl}, MaxEnt IRL \cite{ziebart2008maximum} also models the trajectory distribution in an energy form, which reminds us to analyze the relation between EBIL and IRL.
Recent remarkable works focus on solving the IRL problem in an adversarial style \cite{ho2016generative,fu2017learning,finn2016guided} motivated by the progress in Generative Adversarial Nets (GANs) \cite{goodfellow2014generative}, which alternately optimize the reward function and a maximum-entropy policy corresponding to that reward.  In this section, we aims to understand why previous solutions need a alternating training between the policy and the reward, and what to do if we want a two-stage algorithm. We conclude that MaxEnt IRL can be seen as a special form of EBIL which alternately estimates the energy and EBIL can be a simplified alternative to adversarial IRL methods. 

Related theoretical connections among IRL, GANs have been thoroughly discussed in work by \citet{finn2016connection}, which can be bridged by EBMs. We borrow the insight from \citet{finn2016connection} and figure out the following questions: 
\begin{itemize}[leftmargin=19pt]
    \item[\textbf{Q1}:] Is the adversarial (or alternating) style necessary for IRL?
    \item[\textbf{Q2}:] If unnecessary, how can we avoid adversarial training?
    \item[\textbf{Q3}:] Is the adversarial style superior than other methods?
\end{itemize}

To answer Q1, we need to first identify the reason where the alternating training comes from. As revealed by \citet{finn2016connection}, the connections between MaxEnt IRL and GAN is derived from Guided Cost Learning (GCL) \cite{finn2016guided}, a sampling based method of IRL. Typically, to solve the cost function in \eq{eq:irl}, a maximum likelihood loss function is utilized:
\begin{equation}\label{eq:irl-cost}
\begin{aligned}
     \caL_{\text{cost}}(\theta) = \bbE_{\tau\sim p}[-\log p_{\theta}(\tau)]=\bbE_{\tau\sim p}[c_{\theta}(\tau)]+\log Z~.
     %&= \bbE_{\tau\sim p}[c_{\theta}(\tau)]+\log Z
\end{aligned}
\end{equation}
Notice that such a loss function needs to estimate the partition function $Z$, which is always hard for high-dimensional data. To that end, \citet{finn2016connection} proposed to train a new sampling distribution $q(\tau)$ and estimated the partition function $Z$ via importance sampling $\bbE_{\tau\sim q}\left [\frac{\exp{(-c_{\theta}(\tau))}}{q(\tau)}\right ]$.
In fact, the sampling distribution $p(\tau)$ corresponds to the agent policy, which is optimized to minimize the KL divergence between $q(\tau)$ and the $p_{\theta}(\tau)$:
\begin{equation}
     \caL_{\text{sampler}}(q) = \kld(q(\tau)\|p_{\theta}(\tau)) = \bbE_{\tau\sim q}[c_{\theta}(\tau)+\log q(\tau)]~.
\end{equation}
One may notice that these two optimization problems depend on each other, and thus lead to an alternating training scheme. However, a serious problem comes from the high variance of the importance sampling ratio and \citet{finn2016guided} applied a mixture distribution to alleviate this problem. Another different solution comes from \citet{ho2016generative}, which utilizes the typical unconstrained form of the discriminator without using the generator density, but the optimization of the discriminator is corresponding to the optimization of the cost function in \eq{eq:irl-cost}, as \citet{finn2016connection} proved. Therefore, we understand that the alternating IL algorithm suffers from the interdependence between the cost (reward) function and the policy due to the choice of maximum likelihood objective for the cost function. In this point of view, MaxEnt IRL can be seen as a special form of EBIL which estimates the energy alternately according to the following proposition. 

\begin{proposition}\label{prop:tau-kl}
    Suppose that we have recovered the optimal cost function $\hat{c}^*$, then minimizing the distance between trajectories is equivalent to minimizing the distance of occupancy measures: 
    \begin{equation}\label{eq:tau-kl}
        \begin{aligned}
        \argmin_\pi \kld(p(\tau)\|p(\tau_E)) = \argmin_\pi \kld(\rho(s,a)\|\rho_{\piE}(s,a)) ~.
        \end{aligned}
    \end{equation}
\end{proposition}
The proof is straightforward and we leave it in \ap{ap:tau-kl} for detailed checking. 

Therefore, the answer to Q1 is definitely not and with sufficient analysis we get our conclusion to Q2: we can avoid alternating training if we can decouple the dependence between the optimization of the reward $r(s,a)$ and the policy $\pi(a|s)$. Specifically, if we can \textit{learn the energy $E_{\piE}(s,a)$ without estimating the partition function $Z$}, or \textit{estimate the partition without using a learnable sampling policy}, we are free to change the alternate training style into a two-stage procedure. Therefore, EBIL and MaxEnt IRL are actually two sides of the same coin, and EBIL can be thought as an simplified alternative to adversarial IRL methods, which benefits from the flexibility and simplicity of EBMs. % As an example, we utilize score matching, which avoids the computation of the partition function as a practical implementation in \se{sec:energy-estimation}.

For Q3, we suggest the readers to refer to researches on probabilistic modeling~\cite{gutmann2010noise,lecun2006tutorial}. Although many recent works~\cite{goodfellow2014generative,bose2018adversarial} depend on an adversarially learned sampler, there are no theoretical guarantees showing that such an alternating learning style is more advanced than other streams of methods. Therefore, we can not theoretically get an answer to Q3 so leave the answer of Q3 in our quantitative and qualitative experiments where we compare our two-stage EBIL algorithms with those former adversarial IRL methods on various tasks.
\section{Expert Energy Estimation via Score Matching}
\label{sec:energy-estimation}

As described above, we desire a two-stage energy-based imitation learning algorithm, where the estimation of the energy function can be decoupled from learning the policy. In particular, the estimation of energy can either be done without estimating the partition function, or learned with an additional policy to approximate the partition, which is referred as adversarial IRL methods.
% And the solution is either learning the energy $E_{\piE}(s,a)$ without estimating the partition function $Z$ or estimating the partition without using a learnable sampling policy. %As for the former, we can refer to the branch of score matching methods~\cite{vincent2011connection,song2020sliced} and for the letter, we can choose from the streams of noise contrastive estimation works\cite{gutmann2010noise}.
However, although various related work lie in the domain of energy based statistic modeling, they are not easy to be used for EBIL since many of them do not estimate the energy value $E(x)$ itself. For example, although the branch of score matching methods~\cite{vincent2011connection,song2020sliced} defines the score as the gradient of the energy, i.e., $\nabla_x\log p(x)=-\nabla_x E(x)$, estimate $-\nabla_x E(x)$ is enough for statistic modeling. To that end, in this paper, we refer to a recent denoising score matching work (e.g., DEEN~\cite{saremi2018deep}) that directly estimates the energy value through deep neural network in a differentiable framework.

Formally, let the random variable $X=(s,a)\sim \rho_\piE(s,a)$. 
% \cite{robbins1955empirical,miyasawa1961empirical,raphan2011least} presents the connections between score matching and Bayesian inference for signal restoration of different noise distributions. 
Let the random variable $Y$ be the noisy observation of $X$ that $y\sim x+N(0, \sigma^2I)$, \textit{i.e.}, $y$ is derived from samples $x$ by adding with white Gaussian noise $\xi\sim N(0, \sigma^2I)$. 
The empirical Bayes least square estimator, \textit{i.e.}, the optimal denoising function $g(y)$ for the white Gaussian noise, is solved as
\begin{equation}
\begin{aligned}\label{eq:denoise}
    g(y) = y + \sigma^2\nabla_y\log{p(y)}~.
\end{aligned}
\end{equation}
Solving such a problem, we can get the score function $\nabla_y\log p(y)$. But remember we need the energy function instead, therefore a parameterized energy function $E_\theta(y)$ is modeled by a neural network explicitly. As shown in \cite{saremi2018deep}, such a DEEN framework can be trained by minimizing the following objective:
\begin{equation}\label{eq:deen}
    \argmin_{\theta} \sum_{x_i\in X, y_i\in Y} \left \| x_i - y_i + \sigma^2\frac{\partial E_\theta(y=y_i)}{\partial y} \right \|^2~,
\end{equation}
which ensures the relation of score function $\partial E_\theta(y)/\partial y$ shown in \eq{eq:denoise}. It is worth noting that the EBM estimates the energy of the noisy samples. This can be seen as a Parzen window estimation of $p(x)$ with variance $\sigma^2$ as the smoothing parameter~\cite{saremi2019neural,vincent2011connection}. A trivial problem here is that \eq{eq:deen} requires the samples (state-action pairs) to be continuous so that the gradient can be accurately computed. Practically, such a problem can be solved via state/action embedding or using other energy estimation methods, \textit{e.g.}, Noise Contrastive Estimation~\cite{gutmann2010noise}. %In this work, we concentrate on the continuous space and leave the discrete version in the future.

In practice, we learn the EBM of expert data from offline demonstrations and construct the reward function, which will be fixed until the end to help agent learn its policy with a normal RL procedure. Specifically, we construct the surrogate reward function $\hat{r}(s,a)$ as follows:
\begin{equation}\label{eq:reward}
    \hat{r}(s,a) = h(-E_{\piE}(s,a))~,
\end{equation}
where $h(x)$ is a monotonically increasing linear function, which can be specified for different environments.
Formally, the overall EBIL algorithm is presented in \alg{alg:EBIL}.

\label{ap:algo}
\begin{algorithm}[htbp]
    \caption{Energy-Based Imitation Learning}
    \label{alg:EBIL}
    \begin{algorithmic}[1]
       \STATE {\bfseries Input:} Expert demonstration data $\tau_E=\{ (s_i, a_i) \}_{i=1}^{N}$,  parameterized energy-based model $E_{\phi}$, parameterized policy $\pi_{\theta}$;
       
       \FOR{$k = 0, 1, 2, \dotsc$}
       \STATE Optimize $\phi$ with the objective in \eq{eq:deen}.
       \ENDFOR
       
       Compute the surrogate reward function $\hat{r}$ via \eq{eq:reward}.
       
       \FOR{$k = 0, 1, 2, \dotsc$}
       \STATE Update $\theta$ with a normal RL procedure using the surrogate reward function $\hat{r}$.
       \ENDFOR
       
     \RETURN {$\pi$}
    \end{algorithmic}
\end{algorithm}
\section{Related Work}
\subsection{Imitation Learning}
As extensions for the traditional solution as inverse reinforcement learning~\cite{ng2000algorithms,abbeel2004apprenticeship,fu2017learning}, generative adversarial algorithms have been raised up since years ago. Tracing back to GAIL, which models the imitation learning as an occupancy measure matching problem, and takes a GAN-form objective to optimize both the reward and the policy~\cite{ho2016generative}. After that, Adversarial Inverse Reinforcement Learning (AIRL) simplifies the idea of \citet{finn2016connection} and use a disentangled discriminator to recover the reward function in an energy-form~\cite{fu2017learning}. However, as we show in experiments, they do not actually recover the energy. Based on previous works, \citet{ke2019imitation} and \citet{ghasemipour2019divergence} concurrently proposed to unify the adversarial learning algorithms with f-divergence. Like \citet{nowozin2016f}, they claimed that any $f$-divergence can be used to construct a generative adversarial imitation learning algorithm. Specifically, \citet{ghasemipour2019divergence} proposed FAIRL, which adopts the forward KL as the distance metric.

Instead of seeking to alternatively update the policy and the reward function as in IRL and GAIL, many recent works of IL aim to learn a fixed reward function directly from expert demonstrations and then apply a normal reinforcement learning procedure with that reward function. This idea can be found inherently in Generative Moment Matching Imitation Learning (GMMIL)~\cite{kim2018imitation} that utilized the maximum mean discrepancy as the distance metric to guide training. Recently, \citet{wang2019red} proposed Random Expert Distillation (RED), which employs the idea of Random Network Distillation\cite{burda2018rnd} to compute the reward function by the loss of fitting a random neural network or an auto-encoder. \citet{reddy2019softqil} applied constant rewards by setting positive rewards for the expert state-actions and zero rewards for other ones, which is optimized with the off-policy Soft Q-Learning (SQL) algorithm~\cite{haarnoja2018soft}. In addition, Disagreement-Regularized Imitation Learning (DRIL) \cite{brantley2020disagreementregularized} constructs the reward function using the disagreement in their predictions of an ensemble of policies trained on the demonstration data, which is optimized together with a supervised behavioral cloning cost. 
These works resembles the idea of EBIL, where the reward function can be seen as approximated energy functions. For example, RED \cite{wang2019red} estimates the reward using an auto-encoder, which is exactly the way of energy modeling in EBGAN~\cite{zhao2016energy}. In addition, \citet{wang2019red} and \citet{brantley2020disagreementregularized} also utilized the prediction errors, which are low on demonstration data but high on data that is out of the demonstration (similar to energy). The method of \citet{reddy2019softqil} is more straightforward, which simply sets the energy of the expert as 0 and the agent's as 1. However, these approximated energies are not derived from statistical modeling and lack theoretical 
correctness. % Besides these methods, \citet{kostrikov2019imitation} proposed valueDICE, which solves imitation learning via off-policy distribution ratio estimation without a separate RL optimization. valueDICE overcomes the on-policy limitation of previous alternative optimization methods as GAIL and IRL, which is required access to on-policy samples. However, with our two-stage training style and a independent policy learning stage, we are free to combine any on-policy or off-policy RL learning algorithms. 
It is worth noting that combining the idea of EBGAN~\cite{zhao2016energy} and GAIL~\cite{ho2016generative} we can also design an adversarial style energy-based algorithm for imitation learning, however it is less interesting and also does not have statistical support about the learned energy.

\subsection{Energy-Based Modeling}
Our EBIL relies highly on EBMs, which have played an important role in a wide range of tasks including image modeling, trajectory modeling and continual online learning \cite{Du2019generalization}. 
%works based on EBMs such as \cite{grathwohl2020classfierisEBM} can make breakthroughs.
Thanks to the appealing features, EBMs have been introduced into many RL literature, for instance, parameterized as value function~\cite{sallans2004valueEBM}, employed in the actor-critic framework~\cite{heess2012acEBMs}, applied to MaxEnt RL~\cite{haarnoja2017reinforcement} and model-based RL regularization~\cite{boney2019regularizing}. 
%\cite{finn2016connection} also reveals the connection between IRL and EBMs.
However, EBMs are always difficult to train due to the partition function \cite{finn2016connection}. Nevertheless, recent works have tackled the problem of training large-scale EBMs on high-dimensional data, such as DEEN~\cite{saremi2018deep} which is applied in our implementation. Except for DEEN, there still leave plenty of choices for efficiently training EBMs \cite{gutmann2012noise,Du2019generalization,nijkamp2019learn}. 

\section{Experiments}
\label{sec:exps}

In this section we seek to empirically evaluate EBIL to figure out the effectiveness of our solution compared to previous works, especially the alternative optimization methods. We first conduct qualitative and quantitative experiments on a simple one-dimensional environment, where we illustrate the recovered reward in the whole state-action space and show the training stability of EBIL. Then, we test EBIL against baselines on benchmark environments using sub-optimal experts and release competitive performance.

\subsection{Synthetic Task}
\label{sec:one-domain}

\begin{figure}
%\vspace{-7pt}
\includegraphics[width=0.8\linewidth]{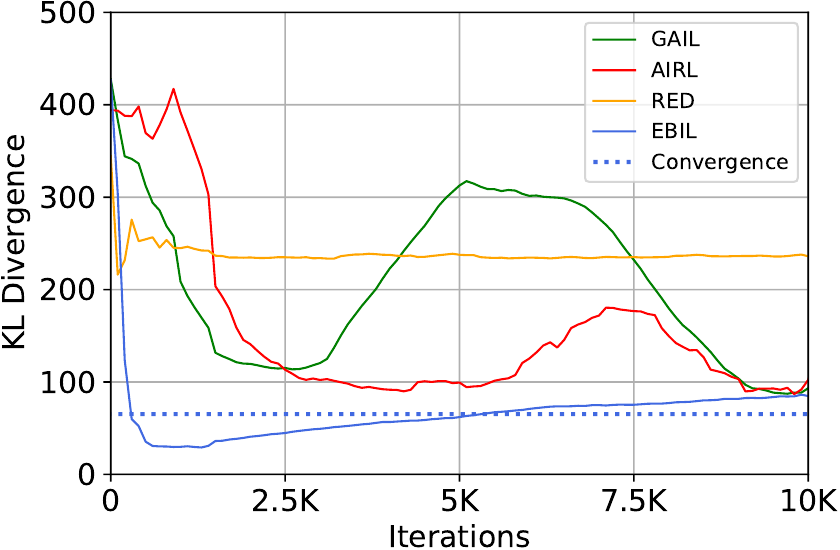}
\vspace{-2pt}
\caption{The KL divergence between the agent trajectories and the expert during the learning procedure, which indicates that EBIL is much more stable than the other methods. The blue dash line represents the converged result of EBIL.}
\vspace{-14pt}
\label{fig:kl-div}
\end{figure}

In the synthetic task, we want to evaluate the qualitative performance of different IL methods by displaying the heat map of the learned reward signals and sampled trajectories. 
We want EBIL to be capable of guiding the agent to recover the expert policy and correspondingly generate the high-quality trajectories. Therefore, we evaluate EBIL on a synthetic environment where the agent tries to move in a one-dimensional space. 
Specifically, the state space is $[ -0.5, 10.5 ]$ and the action space is $[-1, 1]$. The environment initializes the state at $0$, and we set the expert policy as static rule policies $\pi_E = \gN(0.25, 0.06)$ when the state $s\in [ -0.5, 5 )$, and $\pi_E = \gN(0.75, 0.06)$ when $s\in [ 5, 10.5 ]$. The sampled expert demonstration contains $40$ trajectories with up to $30$ timesteps in each one. 
For all methods, we choose Soft Actor Critic (SAC) \cite{haarnoja2018soft} as the learning algorithm and we continue training each algorithm until convergence.

In this illustrative experiment, we compare EBIL aginst GAIL~\cite{ho2016generative}, AIRL~\cite{fu2017learning} and RED~\cite{wang2019red}, where GAIL and AIRL are two representative works of adversarial imitation learning which take an alternative updating on the reward and the policy. RED resembles EBIL which also take a two-stage training by first estimate the reward through the prediction error of a trained network with a randomized one. We plot the KL divergence between the agent's and the expert's trajectories during the training procedure in \fig{fig:kl-div} and visualize the final estimated rewards with corresponding induced trajectories in \fig{fig:heat_maps}.

As illustrated in \fig{fig:ebil_heat} and \fig{fig:kl-div}, the pre-estimated reward of EBIL successfully captures the density of the expert trajectories, and led by the interpretative reward the induced policy is able to quickly converge to the expert policy. By contrast, GAIL requires a noisy adversarial process to correct the policy. As a result, although GAIL achieves compatible final performance against EBIL~(\fig{fig:gail_heat}), it suffers a slow, unstable training as shown in \fig{fig:kl-div} and assigns meaningless reward in the state-action space, as revealed in \fig{fig:gail_heat}.
In addition, we are surprised to find that AIRL recovers an `inverse'-type reward signals but still learns a good policy, as suggested in \fig{fig:airl_heat}. We analyze such a problem in \ap{ap:airl}, where we conclude that AIRL actually does not recover the energy of experts but the energy with an entropy term. Removing such an entropy term makes the result more reasonable. Finally, we notice that constructing the reward as the prediction errors does not help to recover a meaningful signal, as shown in \fig{fig:red_heat}, where RED suffers from the diverged reward and fails to imitate the expert accurately.

On the contrary, EBIL benefits from meaningful rewards which is tend to learn a deterministic policy which shows the most ood. Thus, the recovered policy has less variance but the mean is very close to the expert, which is a good property. In fact, an optimal policy is usually a deterministic policy due to Bellman equation and a stochastic policy is always used to increase the exploration ability in a learning procedure. the exploration ability is much necesarry for methods as EBIL and RED. Although it is convincing to take the recovered energy as a meaningful reward signal for imitation learning, we must point out that as illustrated in \fig{fig:ebil_heat}, the recovered reward maybe be sparse in the space where the expert scarcely comes by. Thus, to gain better performance, the agents will be required to equip with an efficient RL algorithm with good exploration ability.

\begin{figure*}[!t]
\centering
\subfigure[Expert Trajectories]{
\begin{minipage}[b]{0.28\linewidth}
\label{fig:expert_heat}

\includegraphics[width=1\linewidth]{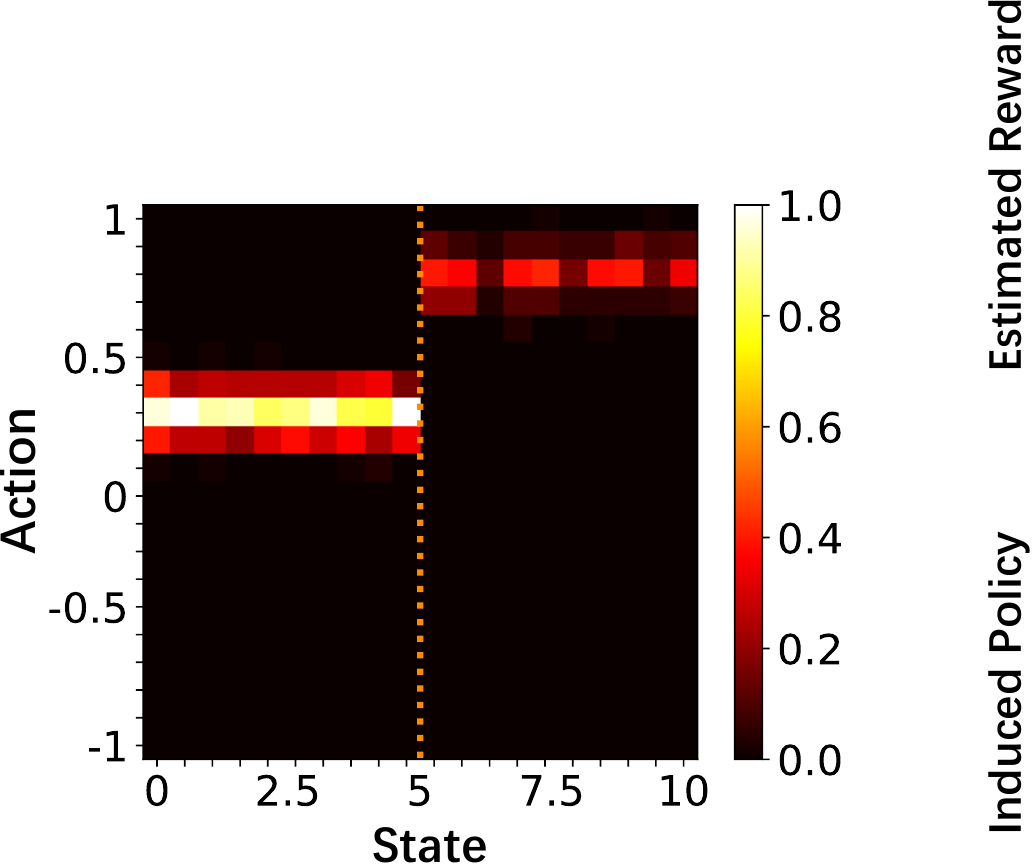}
\vspace{-0.8pt}
\end{minipage}
}
\hspace{-10pt}
\subfigure[EBIL]{
\begin{minipage}[b]{0.155\linewidth}
\label{fig:ebil_heat}
\includegraphics[width=1\linewidth]{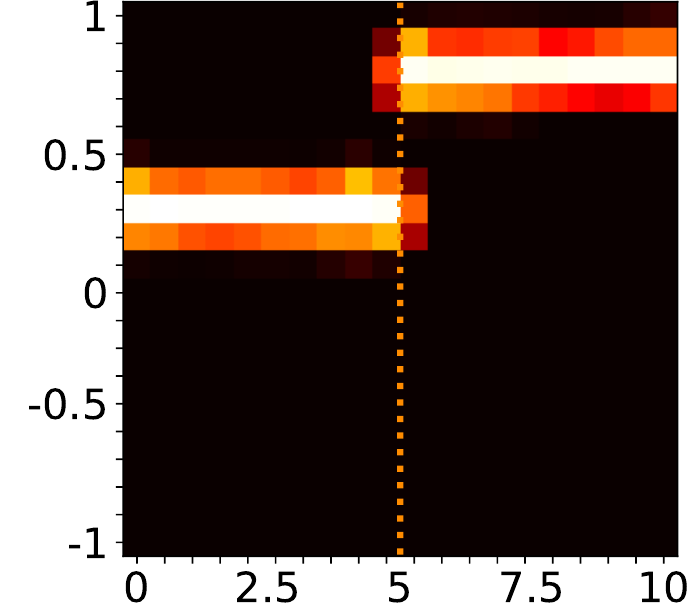}
\includegraphics[width=1\linewidth]{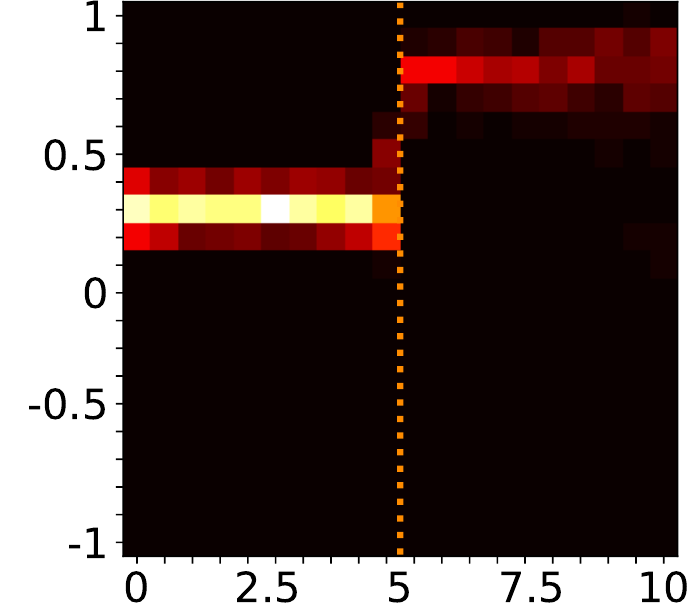}
\end{minipage}
}
\subfigure[GAIL]{
\begin{minipage}[b]{0.155\linewidth}
\label{fig:gail_heat}
\includegraphics[width=1\linewidth]{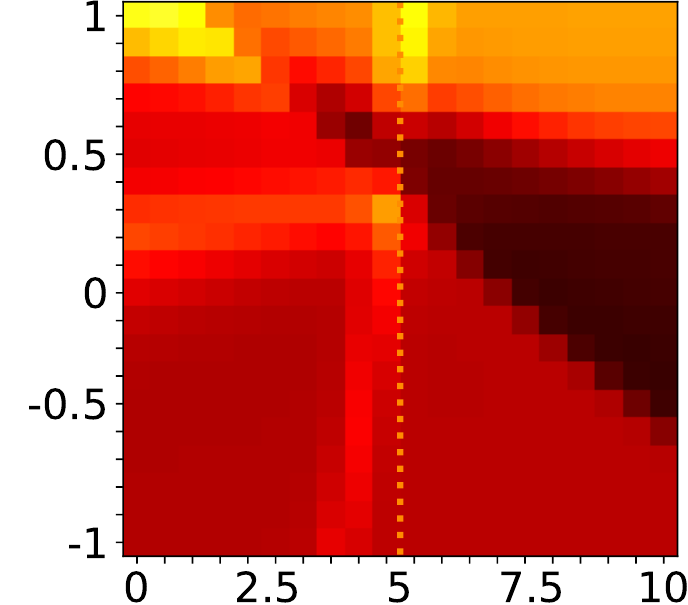}
\includegraphics[width=1\linewidth]{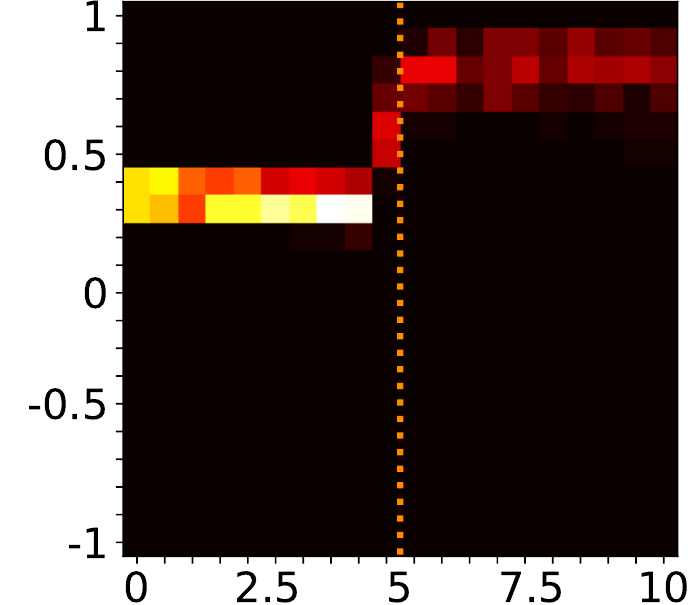}
\end{minipage}
}
\subfigure[AIRL]{
\begin{minipage}[b]{0.155\linewidth}
\label{fig:airl_heat}
\includegraphics[width=1\linewidth]{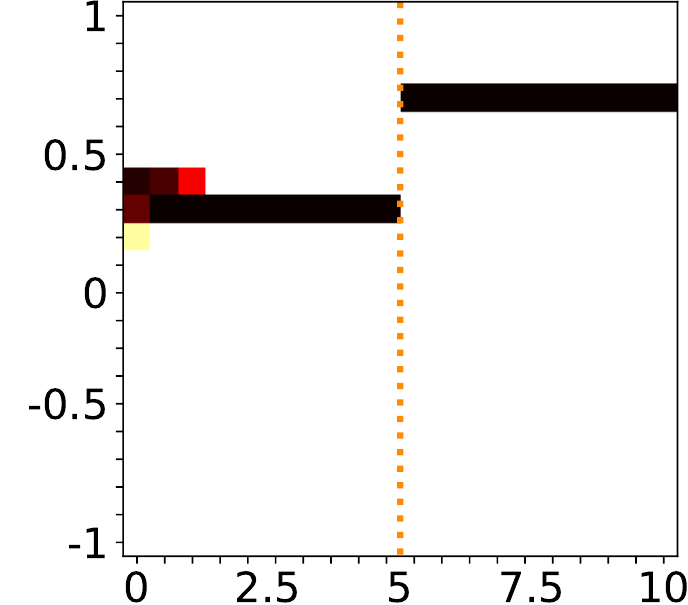}
\includegraphics[width=1\linewidth]{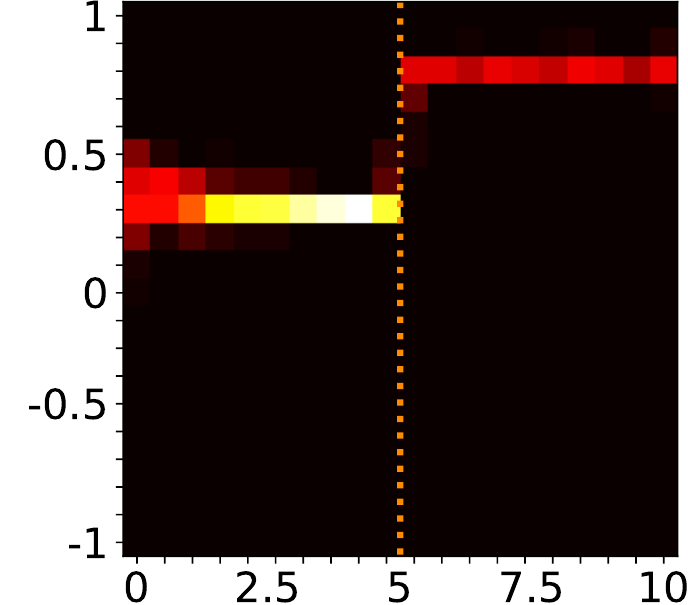}
\end{minipage}
}
\subfigure[RED]{
\begin{minipage}[b]{0.155\linewidth}
\label{fig:red_heat}
\includegraphics[width=1\linewidth]{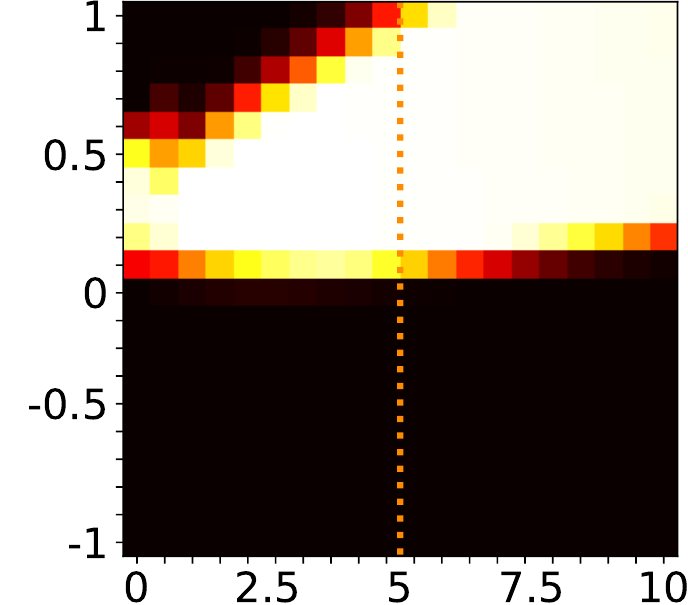}
\includegraphics[width=1\linewidth]{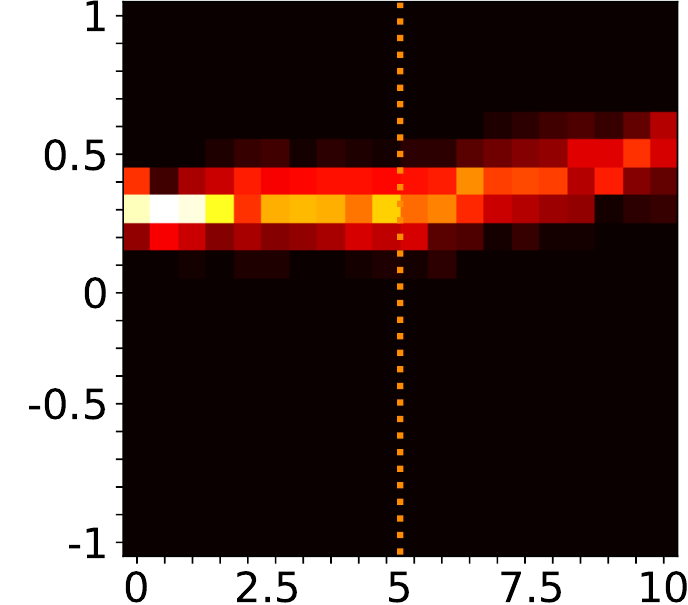}
\end{minipage}
}
\vspace{-5pt}
\caption{Heat maps of the expert trajectories (leftmost), the (final) \textit{estimated rewards} recovered by different methods (top) and the corresponding \textit{induced policy} (bottom). The horizontal and the vertical axis denote the \textit{state space} and the \textit{action space} respectively. The red dotted line represents the position where the agent should change its policy. 
It is worth noting that EBIL and RED both learn fixed reward functions, while GAIL and AIRL iteratively update the reward signals. We do not compare BC since it learns the policy via supervised learning without recovering any reward signals.}
\vspace{-2pt}
\label{fig:heat_maps}
\end{figure*}

%will learn little if they are in those spaces since the learned energy will provide less information. Thus an efficient RL algorithm with good exploration ability is required. This scenario may be common for tasks with high-dimensional state-action space and a small amount of data.

\subsection{Imitation on Continous Control Benchmarks}

\paragraph{Sub-optimal Demonstrations}

We conduct evaluation experiments on the standard continuous control benchmarking MuJoCo environments. For each task, we train experts with OpenAI baselines version~\cite{dhariwal2016openai} of Proximal Policy Optimization (PPO)~\cite{schulman2017proximal} and get sub-optimal demonstrations. We employ Trust Region Policy Optimization (TRPO)~\cite{schulman2015trust} as the learning algorithm in the implementation for all evaluated methods, and we do not apply BC initialization for all tasks. We consider 4 demonstrated trajectories by the sub-optimal expert, as \cite{ho2016generative,wang2019red} do. We compare EBIL with several baseline methods and report the \textit{converged} result in \tb{tab:mujoco}, which are evaluated throughout 50 test episodes. % For GAIL and AIRL, we use a 3 layer MLP model for the discriminator as the energy model used in EBIL.
%The training curves are shown in \ap{ap:exp}, showing the training stability of the algorithm.

As shown in \tb{tab:mujoco}, with sub-optimal demonstrations, EBIL can achieve the best or comparable performance among all environments, indicating that the recovered energy is able to become a good reward signal for imitating the experts. However, learning from such a sub-optimal expert seems challenging for GAIL and AIRL, which are less robust to reach a better performance. We notice that on some environment as InvertedDoublePendulum, GAIL and AIRL can work well during the training, but does not converge until the end. We think the problem is due to the training instability of GAN, which always need an early stop to get a better performance. On the contrary, EBIL provide steady reward signals  instead of a alternative trained one that stabilizes the training. The performance of RED and GMMIL also indicates that the recovered reward by random distillation or the maximum mean discrepancy do not provide a stable guidance in such an sub-optimal setting. However, benefit from such a steady reward function, RED and GMMIL can perform better than GAIL and AIRL on some tasks.

\begin{table*}[t]
\vskip 0.15in
\caption{Comparison for different methods of the episodic true rewards with sub-optimal demonstrations on 5 continuous control benchmarks. The means and the standard deviations are evaluated over different random seeds.}
\vspace{-3pt}
\begin{center}
\begin{tabular}{cccccc}
\toprule
Method & Humanoid & Hopper & Walker2d & Swimmer & InvertedDoublePendulum\\
\midrule
Random & 100.38 $\pm$ 28.25 & 14.21 $\pm$ 11.20 & 0.18 $\pm$ 4.35 & 0.89 $\pm$ 10.96 & 49.57 $\pm$ 16.88 \\
\hline
BC & 178.74 $\pm$ 55.88 & 28.04 $\pm$ 2.73 & 312.04 $\pm$ 83.83 & 5.93 $\pm$ 16.77 & 138.81 $\pm$ 39.99\\
GAIL & 299.52 $\pm$ 81.52  & \textbf{1673.32 $\pm$ 57.13}  & 329.01 $\pm$ 211.84  & 23.79 $\pm$ 21.84  & 327.42 $\pm$ 94.92 \\
AIRL & 286.63 $\pm$ 6.05 & 126.92 $\pm$ 62.39  & 215.79 $\pm$ 23.04  & -13.44 $\pm$ 2.69 & 76.78 $\pm$ 19.63 \\
GMMIL & 416.83 $\pm$ 59.46 & 1000.87 $\pm$ 0.87 & 1585.91 $\pm$ 575.72 & -0.73 $\pm$ 3.28 & 4244.63 $\pm$ 3228.14\\
RED & 140.23 $\pm$ 19.10 & 641.08 $\pm$ 2.24 & 641.13 $\pm$ 2.75 & -3.55 $\pm$ 5.05 & 6400.19 $\pm$ 4302.03 \\
\textbf{EBIL} & \textbf{472.22 $\pm$ 107.72} & 1040.99 $\pm$ 0.53 & \textbf{2334.55 $\pm$ 633.91} & \textbf{58.09 $\pm$ 2.03} & \textbf{8988.37$\pm$ 1812.76}\\
\hline
Expert (PPO) & 1515.36 $\pm$ 683.59 & 1407.36 $\pm$ 176.91 & 2637.27 $\pm$ 1757.72 &  122.09  $\pm$ 2.60 &  6129.10 $\pm$ 3491.47 \\
\bottomrule
\end{tabular}
\label{tab:mujoco}
\end{center}
%\vskip -0.1in
\end{table*}

\paragraph{Optimal Demonstrations}

We also want to know how EBIL performs on optimal demonstrations compared with previous methods, especially adversarial inverse reinforcement learning methods such as GAIL, AIRL, and FAIRL. 
Therefore we evaluate EBIL on optimal demonstrations from \cite{ghasemipour2019divergence}, where the expert is trained by SAC. As before, the demonstration for each task contains 4 trajectories and each trajectory is subsampled by a factor of 20. Similar to \cite{ghasemipour2019divergence}, we finetune each model and checkpoint the model at its \textit{best} validation loss and report the best resulting checkpoints on 50 test episodes. As shown in \tb{tab:mujoco-optimal}, we find that those adversarial algorithms can always achieve better performances on high-dimensional tasks as Hopper and Walker2d, where EBIL remains a gap between these methods. We think this problem is due to the accuracy of the energy model trained by DEEN, which only takes a simple MLP without further regularization operations. We must admit that benefiting from the various improvements of GAN such as gradient penelty~\cite{gulrajani2017improved} that makes the training more stable, adversarial algorithms has advantages over the traditional statistical modeling of EBMs, especially in high-dimensional space, since they can continue to improve the learning of data distribution in the iterative training procedure, while score matching methods as DEEN only model the provided dataset from scratch. 
However, on an easier environment LunarLander where the EBM can provide many meaningful and dense rewards, EBIL outperforms the others while all the three adversarial algorithms have large variances. 

\begin{table}[htbp]
\vskip 0.15in
\caption{Comparison for different methods of the episodic true rewards with optimal demonstrations on 3 continuous control benchmarks. The means and the standard deviations are evaluated over different random seeds. We notice that EBIL can work better than adversarial inverse reinforcement learning algorithms on low-dimensional environments as LunarLander but remains performance gaps on harder tasks as Hopper and Walker2d.}
\vspace{-2pt}
\begin{center}
\begin{tabular}{cccccc}
\toprule
Method & LunarLander & Hopper & Walker2d \\
\hline
Random & -232.81 $\pm$ 139.72 & 14.21 $\pm$ 11.20 & 0.18 $\pm$ 4.35 \\
\midrule
GAIL & -85.85 $\pm$ 59.22  & 3117.50 $\pm$ 2.96  & 4092.86 $\pm$ 7.46 \\
AIRL & -66.07 $\pm$ 104.14 & \textbf{3398.72 $\pm$ 8.39}  & 3987.23  $\pm$ 334.04 \\
FAIRL & -116.90 $\pm$ 15.79 & 3353.78 $\pm$ 9.12  & \textbf{4225.66  $\pm$ 65.11} \\
\textbf{EBIL} & \textbf{237.95 $\pm$ 44.96} & 2401.93 $\pm$ 6.85 & 3026.60 $\pm$ 57.14 \\ %4563.03 $\pm$ 9.26\\
\hline
Expert & 254.90 $\pm$ 24.11 & 3285.92 $\pm$ 2.14 & 4807.22 $\pm$ 166.35\\
\bottomrule
\end{tabular}
\label{tab:mujoco-optimal}
\end{center}
%\vskip -0.1in
\vspace{-3pt}
\end{table}

\begin{figure*}[htbp]
\centering
\subfigure[Target State Marginal Distribution]{
\begin{minipage}[b]{0.23\linewidth}
\label{fig:expert-smm}
\includegraphics[width=1\linewidth]{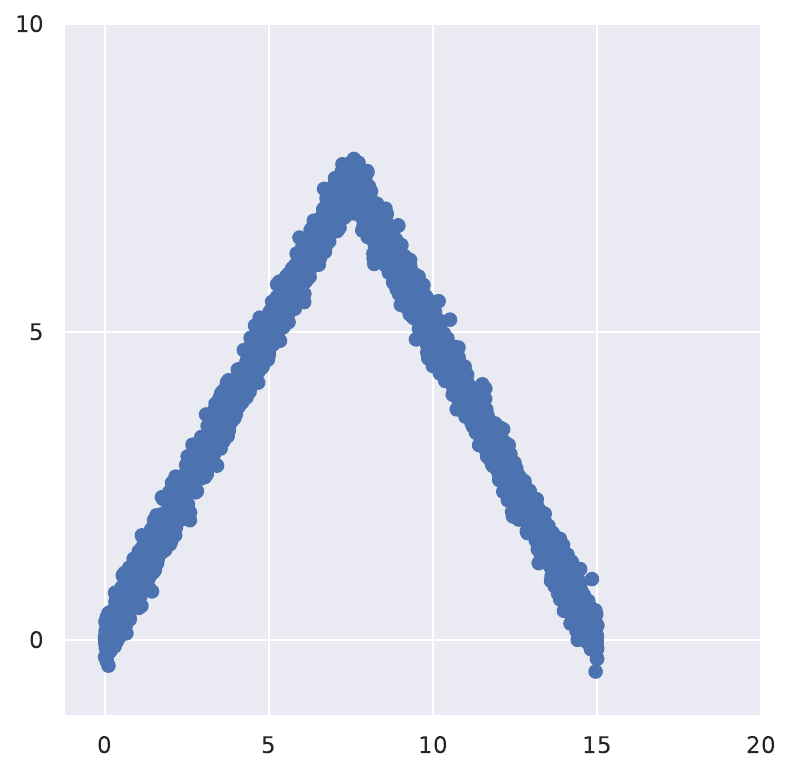}
\end{minipage}
}
\subfigure[Recovered Energy]{
\begin{minipage}[b]{0.23\linewidth}
\label{fig:energy-smm}
\includegraphics[width=1\linewidth]{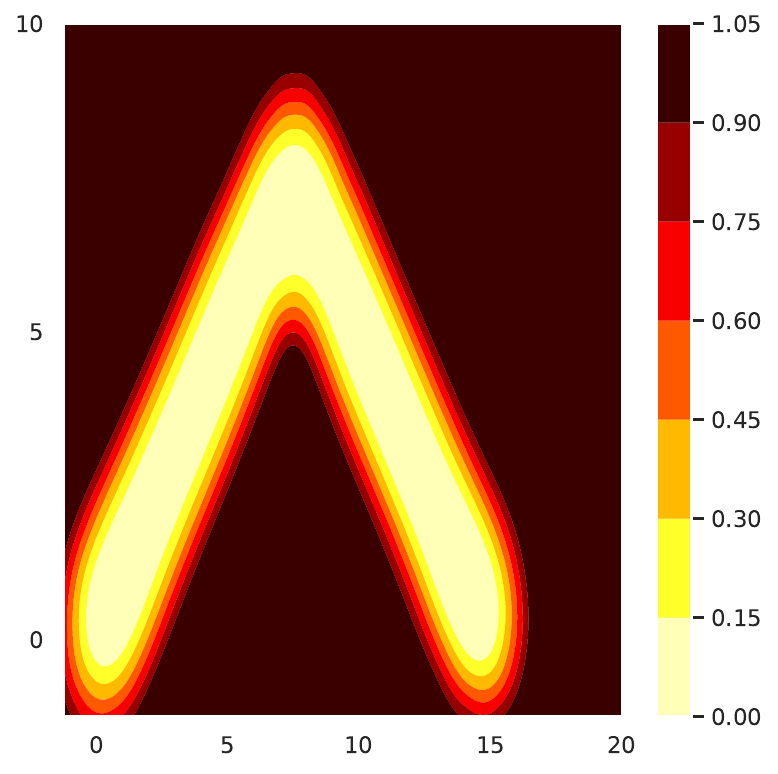}
\end{minipage}
}
\subfigure[Induced Policy]{
\begin{minipage}[b]{0.23\linewidth}
\label{fig:policy-smm}
\includegraphics[width=1\linewidth]{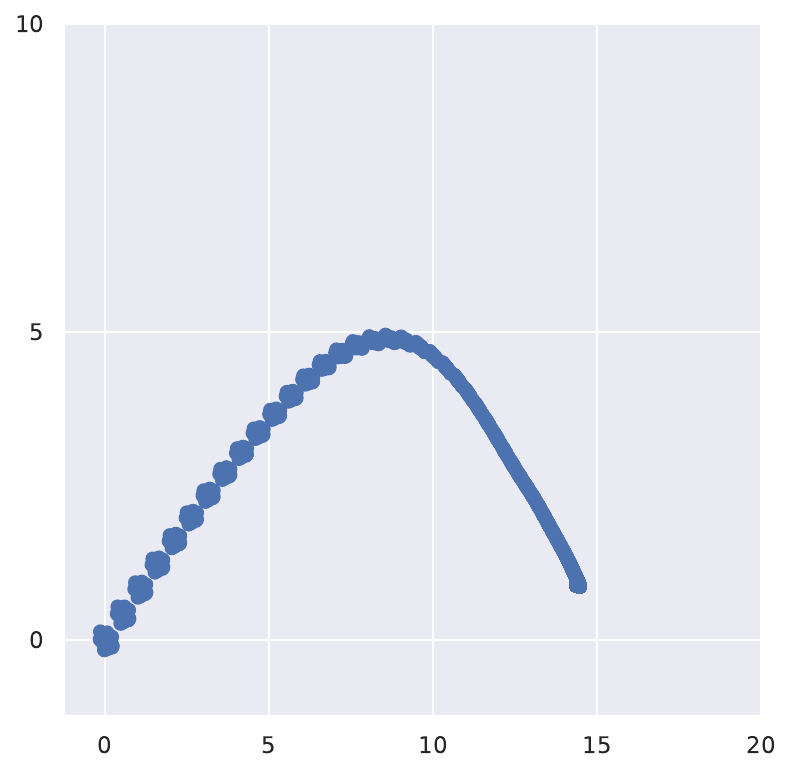}
\end{minipage}
}
\vspace{-8pt}
\caption{(a) Heuristically-designed samples from target state marginal distribution. (b) The recovered energy by EBIL, representing the density of the demonstrations, where the darker color represent the lower energy and higher rewards. (c) The induced policy using the state-only reward constructed from the energy as shown in (b). The energy value is in a range of $[0,1]$ since we use \textit{sigmoid} for the last layer of the DEEN network on this task.}
\vspace{-2pt}
\label{fig:smm}
\end{figure*}

\begin{figure}[htbp]
\centering
\includegraphics[width=0.49\columnwidth]{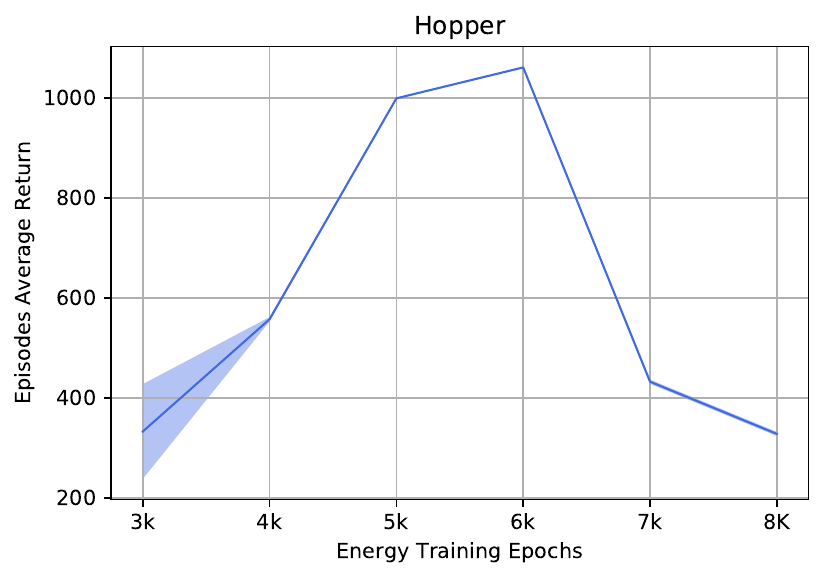}
\includegraphics[width=0.49\columnwidth]{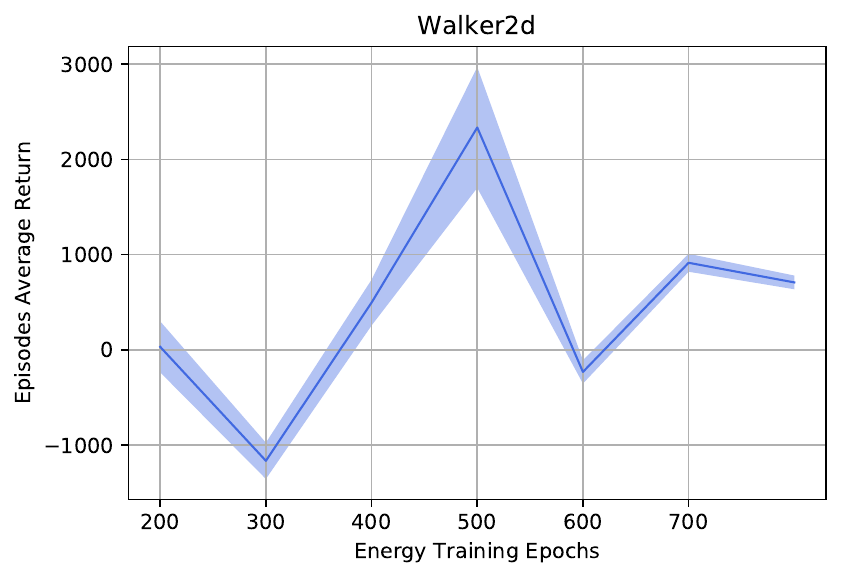}
\vspace{-7pt}
\caption{Ablation study on the average episode rewards evaluated on Hopper and Walker2 that are learned with energy models from different training epochs using sub-optimal demonstrations, where the solid line and the shade represent the mean and the standard deviation on different runs separately.}
\vspace{-2pt}
\label{fig:ablation-energy}
\end{figure}

\subsection{State Marginal Matching}

In this section, we show that EBIL can also be effective in matching the state marginal distributions for a given dataset. Motivated by \citet{ghasemipour2019divergence}, we try to make the agent learn the desired policy without expert demonstrations. Specifically, 
unlike the traditional IL setting, the target state marginal distribution does not even have to be a realizable state-marginal distribution collected by expensive expert demonstrations, but easy-to-get heuristically-designed interpretable distributions. Therefore, we test our methods in a 2D point mass task similar to ~\cite{ghasemipour2019divergence}, where the agent can move around the ground. We first generate 4,000 heuristic samples as shown in \fig{fig:expert-smm}, learn the energy from them and then train the policy guided by the energy.

As expected, \fig{fig:energy-smm} illustrates that DEEN successfully recovers the meaningful reward function, which describes the density of the target state marginal distribution. We then construct the reward function as $r(s) = h(-E(s))$ where $h$ is a monotonic linear function, with which the agent is finally able to induce similar trajectories to the expert. It is worth noting that we shape the recovered reward with an encourage on the exploration on the x-axis, since we find that with the state-only reward, the agent tends to get stuck, i.e., learn to go forth and back at the very beginning near the starting point. Such a task reminds us of another possibility of finishing reinforcement learning tasks, where people do not have to carefully design the reward function but the samples on the desired trajectories. And by learning from the recovered energy of the provided samples with simple intuitive shaping that prevents the agent from cycling around, the agent can finally learn to complete the task.

\subsection{Ablation Study}
\label{sec:abl-study}

It is worth noting that in our experiments, we also find that a well-trained energy network may be hard for agents to learn the expert policies on some environments. 
We regard it as the ``sparse reward signals'' problem, as discussed in \se{sec:one-domain}. By contrast, sometimes a ``half-trained'' model may provide smoother rewards, and can help the agent to learn more efficiently. Actually, a similar phenomenon also occurs when training the discriminator in GAN. 
Therefore, to further understand the effect of the energy model from different training epochs, we conduct an ablation study on energy models trained from different epochs. The results are illustrated in \fig{fig:ablation-energy}, which verifies our intuition that a ``half-trained'' model can provide smoother rewards that solve the ``sparse reward'' problem, which is better for imitating the expert policy.

\section{Conclusion}

In this paper, we propose Energy-Based Imitation Learning (EBIL), which shows that it is feasible to compute a fixed reward function via directly estimating the expert's energy to help agents learn from the demonstrations, which breaks out of the traditional iterative IRL paradigm. We further discuss the relation of our method with Maximum Entropy Inverse Reinforcement Learning (MaxEnt IRL) and reveal that these methods are actually two sides of the same coin, where EBIL can be regarded as a simplified alternative of adversarial IRL methods. We conduct quantitative and qualitative evaluations in multiple tasks, showing that with recovering meaningful rewards, EBIL can finally lead to comparable performance against previous algorithms. For future work, we can try different energy estimation methods for expert demonstrations, and exploring more possibilities to utilize EBM to help the agent to learn in different reinforcement learning tasks as state-only learning. It is also intriguing to expand EBIL into multi-agent environments and construct interpretative reward signals for multi-agent learning.

\section*{Acknowledgement}

Weinan Zhang is supported by ``New Generation of AI 2030'' Major Project (2018AAA0100900) and National Natural Science Foundation of China (62076161, 61772333, 61632017). 
The author Minghuan Liu is supported by Wu Wen Jun Honorary Doctoral Scholarship, AI Institute, Shanghai Jiao Tong University.
The work is also supposed by MSRA Joint Research Grant.
%We sincerely thank Jian Shen and Ming Zhou for helpful discussion and revision. And we would like to thank the reviewers for constructive feedback.

\bibliographystyle{ACM-Reference-Format}
\balance
\bibliography{ref}

%%% -*-BibTeX-*-
%%% Do NOT edit. File created by BibTeX with style
%%% ACM-Reference-Format-Journals [18-Jan-2012].

\begin{thebibliography}{48}

%%% ====================================================================
%%% NOTE TO THE USER: you can override these defaults by providing
%%% customized versions of any of these macros before the \bibliography
%%% command.  Each of them MUST provide its own final punctuation,
%%% except for \shownote{}, \showDOI{}, and \showURL{}.  The latter two
%%% do not use final punctuation, in order to avoid confusing it with
%%% the Web address.
%%%
%%% To suppress output of a particular field, define its macro to expand
%%% to an empty string, or better, \unskip, like this:
%%%
%%% \newcommand{\showDOI}[1]{\unskip}   % LaTeX syntax
%%%
%%% \def \showDOI #1{\unskip}           % plain TeX syntax
%%%
%%% ====================================================================

\ifx \showCODEN    \undefined \def \showCODEN     #1{\unskip}     \fi
\ifx \showDOI      \undefined \def \showDOI       #1{#1}\fi
\ifx \showISBNx    \undefined \def \showISBNx     #1{\unskip}     \fi
\ifx \showISBNxiii \undefined \def \showISBNxiii  #1{\unskip}     \fi
\ifx \showISSN     \undefined \def \showISSN      #1{\unskip}     \fi
\ifx \showLCCN     \undefined \def \showLCCN      #1{\unskip}     \fi
\ifx \shownote     \undefined \def \shownote      #1{#1}          \fi
\ifx \showarticletitle \undefined \def \showarticletitle #1{#1}   \fi
\ifx \showURL      \undefined \def \showURL       {\relax}        \fi
% The following commands are used for tagged output and should be
% invisible to TeX
\providecommand\bibfield[2]{#2}
\providecommand\bibinfo[2]{#2}
\providecommand\natexlab[1]{#1}
\providecommand\showeprint[2][]{arXiv:#2}

\bibitem[\protect\citeauthoryear{Abbeel and Ng}{Abbeel and Ng}{2004}]%
        {abbeel2004apprenticeship}
\bibfield{author}{\bibinfo{person}{Pieter Abbeel} {and}
  \bibinfo{person}{Andrew~Y Ng}.} \bibinfo{year}{2004}\natexlab{}.
\newblock \showarticletitle{Apprenticeship learning via inverse reinforcement
  learning}. In \bibinfo{booktitle}{\emph{Proceedings of the twenty-first
  international conference on Machine learning}}. \bibinfo{pages}{1}.
\newblock


\bibitem[\protect\citeauthoryear{Bloem and Bambos}{Bloem and Bambos}{2014}]%
        {bloem2014infinite}
\bibfield{author}{\bibinfo{person}{Michael Bloem} {and}
  \bibinfo{person}{Nicholas Bambos}.} \bibinfo{year}{2014}\natexlab{}.
\newblock \showarticletitle{Infinite time horizon maximum causal entropy
  inverse reinforcement learning}. In \bibinfo{booktitle}{\emph{53rd IEEE
  Conference on Decision and Control}}. IEEE, \bibinfo{pages}{4911--4916}.
\newblock


\bibitem[\protect\citeauthoryear{Boney, Kannala, and Ilin}{Boney
  et~al\mbox{.}}{2019}]%
        {boney2019regularizing}
\bibfield{author}{\bibinfo{person}{Rinu Boney}, \bibinfo{person}{Juho Kannala},
  {and} \bibinfo{person}{Alexander Ilin}.} \bibinfo{year}{2019}\natexlab{}.
\newblock \showarticletitle{Regularizing Model-Based Planning with Energy-Based
  Models}.
\newblock \bibinfo{journal}{\emph{arXiv preprint arXiv:1910.05527}}
  (\bibinfo{year}{2019}).
\newblock


\bibitem[\protect\citeauthoryear{Bose, Ling, and Cao}{Bose
  et~al\mbox{.}}{2018}]%
        {bose2018adversarial}
\bibfield{author}{\bibinfo{person}{Avishek~Joey Bose}, \bibinfo{person}{Huan
  Ling}, {and} \bibinfo{person}{Yanshuai Cao}.}
  \bibinfo{year}{2018}\natexlab{}.
\newblock \showarticletitle{Adversarial contrastive estimation}.
\newblock \bibinfo{journal}{\emph{arXiv preprint arXiv:1805.03642}}
  (\bibinfo{year}{2018}).
\newblock


\bibitem[\protect\citeauthoryear{Brantley, Sun, and Henaff}{Brantley
  et~al\mbox{.}}{2020}]%
        {brantley2020disagreementregularized}
\bibfield{author}{\bibinfo{person}{Kiante Brantley}, \bibinfo{person}{Wen Sun},
  {and} \bibinfo{person}{Mikael Henaff}.} \bibinfo{year}{2020}\natexlab{}.
\newblock \showarticletitle{Disagreement-Regularized Imitation Learning}. In
  \bibinfo{booktitle}{\emph{International Conference on Learning
  Representations}}.
\newblock
\urldef\tempurl%
\url{https://openreview.net/forum?id=rkgbYyHtwB}
\showURL{%
\tempurl}


\bibitem[\protect\citeauthoryear{Brock, Donahue, and Simonyan}{Brock
  et~al\mbox{.}}{2018}]%
        {brock2018large}
\bibfield{author}{\bibinfo{person}{Andrew Brock}, \bibinfo{person}{Jeff
  Donahue}, {and} \bibinfo{person}{Karen Simonyan}.}
  \bibinfo{year}{2018}\natexlab{}.
\newblock \showarticletitle{Large scale gan training for high fidelity natural
  image synthesis}.
\newblock \bibinfo{journal}{\emph{arXiv preprint arXiv:1809.11096}}
  (\bibinfo{year}{2018}).
\newblock


\bibitem[\protect\citeauthoryear{Burda, Edwards, Storkey, and Klimov}{Burda
  et~al\mbox{.}}{2018}]%
        {burda2018rnd}
\bibfield{author}{\bibinfo{person}{Yuri Burda}, \bibinfo{person}{Harrison
  Edwards}, \bibinfo{person}{Amos~J. Storkey}, {and} \bibinfo{person}{Oleg
  Klimov}.} \bibinfo{year}{2018}\natexlab{}.
\newblock \showarticletitle{Exploration by Random Network Distillation}.
\newblock \bibinfo{journal}{\emph{CoRR}}  \bibinfo{volume}{abs/1810.12894}
  (\bibinfo{year}{2018}).
\newblock
\showeprint[arxiv]{1810.12894}
\urldef\tempurl%
\url{http://arxiv.org/abs/1810.12894}
\showURL{%
\tempurl}


\bibitem[\protect\citeauthoryear{Dhariwal, Hesse, Klimov, Nichol, Plappert,
  Radford, Schulman, Sidor, Wu, and Zhokhov}{Dhariwal et~al\mbox{.}}{2016}]%
        {dhariwal2016openai}
\bibfield{author}{\bibinfo{person}{Prafulla Dhariwal},
  \bibinfo{person}{Christopher Hesse}, \bibinfo{person}{Oleg Klimov},
  \bibinfo{person}{Alex Nichol}, \bibinfo{person}{Matthias Plappert},
  \bibinfo{person}{Alec Radford}, \bibinfo{person}{John Schulman},
  \bibinfo{person}{Szymon Sidor}, \bibinfo{person}{Yuhuai Wu}, {and}
  \bibinfo{person}{Peter Zhokhov}.} \bibinfo{year}{2016}\natexlab{}.
\newblock \showarticletitle{Openai baselines (2017)}.
\newblock \bibinfo{journal}{\emph{URL https://github.com/openai/baselines}}
  (\bibinfo{year}{2016}).
\newblock


\bibitem[\protect\citeauthoryear{Du and Mordatch}{Du and Mordatch}{2019a}]%
        {Du2019generalization}
\bibfield{author}{\bibinfo{person}{Yilun Du} {and} \bibinfo{person}{Igor
  Mordatch}.} \bibinfo{year}{2019}\natexlab{a}.
\newblock \showarticletitle{Implicit Generation and Generalization in
  Energy-Based Models}.
\newblock \bibinfo{journal}{\emph{CoRR}}  \bibinfo{volume}{abs/1903.08689}
  (\bibinfo{year}{2019}).
\newblock
\showeprint[arxiv]{1903.08689}
\urldef\tempurl%
\url{http://arxiv.org/abs/1903.08689}
\showURL{%
\tempurl}


\bibitem[\protect\citeauthoryear{Du and Mordatch}{Du and Mordatch}{2019b}]%
        {du2019implicit}
\bibfield{author}{\bibinfo{person}{Yilun Du} {and} \bibinfo{person}{Igor
  Mordatch}.} \bibinfo{year}{2019}\natexlab{b}.
\newblock \showarticletitle{Implicit Generation and Modeling with Energy Based
  Models}. In \bibinfo{booktitle}{\emph{Advances in Neural Information
  Processing Systems}}. \bibinfo{pages}{3603--3613}.
\newblock


\bibitem[\protect\citeauthoryear{Finn, Christiano, Abbeel, and Levine}{Finn
  et~al\mbox{.}}{2016a}]%
        {finn2016connection}
\bibfield{author}{\bibinfo{person}{Chelsea Finn}, \bibinfo{person}{Paul
  Christiano}, \bibinfo{person}{Pieter Abbeel}, {and} \bibinfo{person}{Sergey
  Levine}.} \bibinfo{year}{2016}\natexlab{a}.
\newblock \showarticletitle{A connection between generative adversarial
  networks, inverse reinforcement learning, and energy-based models}.
\newblock \bibinfo{journal}{\emph{arXiv preprint arXiv:1611.03852}}
  (\bibinfo{year}{2016}).
\newblock


\bibitem[\protect\citeauthoryear{Finn, Levine, and Abbeel}{Finn
  et~al\mbox{.}}{2016b}]%
        {finn2016guided}
\bibfield{author}{\bibinfo{person}{Chelsea Finn}, \bibinfo{person}{Sergey
  Levine}, {and} \bibinfo{person}{Pieter Abbeel}.}
  \bibinfo{year}{2016}\natexlab{b}.
\newblock \showarticletitle{Guided cost learning: Deep inverse optimal control
  via policy optimization}. In \bibinfo{booktitle}{\emph{International
  Conference on Machine Learning}}. \bibinfo{pages}{49--58}.
\newblock


\bibitem[\protect\citeauthoryear{Fu, Luo, and Levine}{Fu et~al\mbox{.}}{2017}]%
        {fu2017learning}
\bibfield{author}{\bibinfo{person}{Justin Fu}, \bibinfo{person}{Katie Luo},
  {and} \bibinfo{person}{Sergey Levine}.} \bibinfo{year}{2017}\natexlab{}.
\newblock \showarticletitle{Learning robust rewards with adversarial inverse
  reinforcement learning}.
\newblock \bibinfo{journal}{\emph{arXiv preprint arXiv:1710.11248}}
  (\bibinfo{year}{2017}).
\newblock


\bibitem[\protect\citeauthoryear{Ghasemipour, Zemel, and Gu}{Ghasemipour
  et~al\mbox{.}}{2019}]%
        {ghasemipour2019divergence}
\bibfield{author}{\bibinfo{person}{Seyed Kamyar~Seyed Ghasemipour},
  \bibinfo{person}{Richard Zemel}, {and} \bibinfo{person}{Shixiang Gu}.}
  \bibinfo{year}{2019}\natexlab{}.
\newblock \showarticletitle{A Divergence Minimization Perspective on Imitation
  Learning Methods}.
\newblock \bibinfo{journal}{\emph{arXiv preprint arXiv:1911.02256}}
  (\bibinfo{year}{2019}).
\newblock


\bibitem[\protect\citeauthoryear{Goodfellow, Pouget-Abadie, Mirza, Xu,
  Warde-Farley, Ozair, Courville, and Bengio}{Goodfellow et~al\mbox{.}}{2014}]%
        {goodfellow2014generative}
\bibfield{author}{\bibinfo{person}{Ian Goodfellow}, \bibinfo{person}{Jean
  Pouget-Abadie}, \bibinfo{person}{Mehdi Mirza}, \bibinfo{person}{Bing Xu},
  \bibinfo{person}{David Warde-Farley}, \bibinfo{person}{Sherjil Ozair},
  \bibinfo{person}{Aaron Courville}, {and} \bibinfo{person}{Yoshua Bengio}.}
  \bibinfo{year}{2014}\natexlab{}.
\newblock \showarticletitle{Generative adversarial nets}. In
  \bibinfo{booktitle}{\emph{Advances in neural information processing
  systems}}. \bibinfo{pages}{2672--2680}.
\newblock


\bibitem[\protect\citeauthoryear{Gulrajani, Ahmed, Arjovsky, Dumoulin, and
  Courville}{Gulrajani et~al\mbox{.}}{2017}]%
        {gulrajani2017improved}
\bibfield{author}{\bibinfo{person}{Ishaan Gulrajani}, \bibinfo{person}{Faruk
  Ahmed}, \bibinfo{person}{Martin Arjovsky}, \bibinfo{person}{Vincent
  Dumoulin}, {and} \bibinfo{person}{Aaron~C Courville}.}
  \bibinfo{year}{2017}\natexlab{}.
\newblock \showarticletitle{Improved training of wasserstein gans}. In
  \bibinfo{booktitle}{\emph{Advances in neural information processing
  systems}}. \bibinfo{pages}{5767--5777}.
\newblock


\bibitem[\protect\citeauthoryear{Gutmann and Hyv{\"a}rinen}{Gutmann and
  Hyv{\"a}rinen}{2010}]%
        {gutmann2010noise}
\bibfield{author}{\bibinfo{person}{Michael Gutmann} {and} \bibinfo{person}{Aapo
  Hyv{\"a}rinen}.} \bibinfo{year}{2010}\natexlab{}.
\newblock \showarticletitle{Noise-contrastive estimation: A new estimation
  principle for unnormalized statistical models}. In
  \bibinfo{booktitle}{\emph{Proceedings of the Thirteenth International
  Conference on Artificial Intelligence and Statistics}}.
  \bibinfo{pages}{297--304}.
\newblock


\bibitem[\protect\citeauthoryear{Gutmann and Hyv{\"a}rinen}{Gutmann and
  Hyv{\"a}rinen}{2012}]%
        {gutmann2012noise}
\bibfield{author}{\bibinfo{person}{Michael~U Gutmann} {and}
  \bibinfo{person}{Aapo Hyv{\"a}rinen}.} \bibinfo{year}{2012}\natexlab{}.
\newblock \showarticletitle{Noise-contrastive estimation of unnormalized
  statistical models, with applications to natural image statistics}.
\newblock \bibinfo{journal}{\emph{Journal of Machine Learning Research}}
  \bibinfo{volume}{13}, \bibinfo{number}{Feb} (\bibinfo{year}{2012}),
  \bibinfo{pages}{307--361}.
\newblock


\bibitem[\protect\citeauthoryear{Haarnoja, Tang, Abbeel, and Levine}{Haarnoja
  et~al\mbox{.}}{2017}]%
        {haarnoja2017reinforcement}
\bibfield{author}{\bibinfo{person}{Tuomas Haarnoja}, \bibinfo{person}{Haoran
  Tang}, \bibinfo{person}{Pieter Abbeel}, {and} \bibinfo{person}{Sergey
  Levine}.} \bibinfo{year}{2017}\natexlab{}.
\newblock \showarticletitle{Reinforcement learning with deep energy-based
  policies}. In \bibinfo{booktitle}{\emph{Proceedings of the 34th International
  Conference on Machine Learning-Volume 70}}. JMLR. org,
  \bibinfo{pages}{1352--1361}.
\newblock


\bibitem[\protect\citeauthoryear{Haarnoja, Zhou, Abbeel, and Levine}{Haarnoja
  et~al\mbox{.}}{2018}]%
        {haarnoja2018soft}
\bibfield{author}{\bibinfo{person}{Tuomas Haarnoja}, \bibinfo{person}{Aurick
  Zhou}, \bibinfo{person}{Pieter Abbeel}, {and} \bibinfo{person}{Sergey
  Levine}.} \bibinfo{year}{2018}\natexlab{}.
\newblock \showarticletitle{Soft actor-critic: Off-policy maximum entropy deep
  reinforcement learning with a stochastic actor}.
\newblock \bibinfo{journal}{\emph{arXiv preprint arXiv:1801.01290}}
  (\bibinfo{year}{2018}).
\newblock


\bibitem[\protect\citeauthoryear{Heess, Silver, and Teh}{Heess
  et~al\mbox{.}}{2012}]%
        {heess2012acEBMs}
\bibfield{author}{\bibinfo{person}{Nicolas Heess}, \bibinfo{person}{David
  Silver}, {and} \bibinfo{person}{Yee~Whye Teh}.}
  \bibinfo{year}{2012}\natexlab{}.
\newblock \showarticletitle{Actor-Critic Reinforcement Learning with
  Energy-Based Policies.}. In \bibinfo{booktitle}{\emph{EWRL}}.
  \bibinfo{pages}{43--58}.
\newblock


\bibitem[\protect\citeauthoryear{Ho and Ermon}{Ho and Ermon}{2016}]%
        {ho2016generative}
\bibfield{author}{\bibinfo{person}{Jonathan Ho} {and} \bibinfo{person}{Stefano
  Ermon}.} \bibinfo{year}{2016}\natexlab{}.
\newblock \showarticletitle{Generative adversarial imitation learning}. In
  \bibinfo{booktitle}{\emph{Advances in neural information processing
  systems}}. \bibinfo{pages}{4565--4573}.
\newblock


\bibitem[\protect\citeauthoryear{Hussein, Gaber, Elyan, and Jayne}{Hussein
  et~al\mbox{.}}{2017}]%
        {hussein2017imitation}
\bibfield{author}{\bibinfo{person}{Ahmed Hussein},
  \bibinfo{person}{Mohamed~Medhat Gaber}, \bibinfo{person}{Eyad Elyan}, {and}
  \bibinfo{person}{Chrisina Jayne}.} \bibinfo{year}{2017}\natexlab{}.
\newblock \showarticletitle{Imitation learning: A survey of learning methods}.
\newblock \bibinfo{journal}{\emph{ACM Computing Surveys (CSUR)}}
  \bibinfo{volume}{50}, \bibinfo{number}{2} (\bibinfo{year}{2017}),
  \bibinfo{pages}{21}.
\newblock


\bibitem[\protect\citeauthoryear{Ke, Barnes, Sun, Lee, Choudhury, and
  Srinivasa}{Ke et~al\mbox{.}}{2019}]%
        {ke2019imitation}
\bibfield{author}{\bibinfo{person}{Liyiming Ke}, \bibinfo{person}{Matt Barnes},
  \bibinfo{person}{Wen Sun}, \bibinfo{person}{Gilwoo Lee},
  \bibinfo{person}{Sanjiban Choudhury}, {and} \bibinfo{person}{Siddhartha
  Srinivasa}.} \bibinfo{year}{2019}\natexlab{}.
\newblock \showarticletitle{Imitation Learning as $ f $-Divergence
  Minimization}.
\newblock \bibinfo{journal}{\emph{arXiv preprint arXiv:1905.12888}}
  (\bibinfo{year}{2019}).
\newblock


\bibitem[\protect\citeauthoryear{Kim and Park}{Kim and Park}{2018}]%
        {kim2018imitation}
\bibfield{author}{\bibinfo{person}{Kee-Eung Kim} {and}
  \bibinfo{person}{Hyun~Soo Park}.} \bibinfo{year}{2018}\natexlab{}.
\newblock \showarticletitle{Imitation learning via kernel mean embedding}. In
  \bibinfo{booktitle}{\emph{Thirty-Second AAAI Conference on Artificial
  Intelligence}}.
\newblock


\bibitem[\protect\citeauthoryear{Kostrikov, Agrawal, Dwibedi, Levine, and
  Tompson}{Kostrikov et~al\mbox{.}}{2018}]%
        {kostrikov2018discriminator}
\bibfield{author}{\bibinfo{person}{Ilya Kostrikov},
  \bibinfo{person}{Kumar~Krishna Agrawal}, \bibinfo{person}{Debidatta Dwibedi},
  \bibinfo{person}{Sergey Levine}, {and} \bibinfo{person}{Jonathan Tompson}.}
  \bibinfo{year}{2018}\natexlab{}.
\newblock \showarticletitle{Discriminator-actor-critic: Addressing sample
  inefficiency and reward bias in adversarial imitation learning}.
\newblock \bibinfo{journal}{\emph{arXiv preprint arXiv:1809.02925}}
  (\bibinfo{year}{2018}).
\newblock


\bibitem[\protect\citeauthoryear{LeCun, Chopra, Hadsell, Ranzato, and
  Huang}{LeCun et~al\mbox{.}}{2006}]%
        {lecun2006tutorial}
\bibfield{author}{\bibinfo{person}{Yann LeCun}, \bibinfo{person}{Sumit Chopra},
  \bibinfo{person}{Raia Hadsell}, \bibinfo{person}{M Ranzato}, {and}
  \bibinfo{person}{F Huang}.} \bibinfo{year}{2006}\natexlab{}.
\newblock \showarticletitle{A tutorial on energy-based learning}.
\newblock \bibinfo{journal}{\emph{Predicting structured data}}
  \bibinfo{volume}{1}, \bibinfo{number}{0} (\bibinfo{year}{2006}).
\newblock


\bibitem[\protect\citeauthoryear{Liu, Ling, Mu, and Su}{Liu
  et~al\mbox{.}}{2019}]%
        {liu2019state}
\bibfield{author}{\bibinfo{person}{Fangchen Liu}, \bibinfo{person}{Zhan Ling},
  \bibinfo{person}{Tongzhou Mu}, {and} \bibinfo{person}{Hao Su}.}
  \bibinfo{year}{2019}\natexlab{}.
\newblock \showarticletitle{State Alignment-based Imitation Learning}.
\newblock \bibinfo{journal}{\emph{arXiv preprint arXiv:1911.10947}}
  (\bibinfo{year}{2019}).
\newblock


\bibitem[\protect\citeauthoryear{Ng, Russell, et~al\mbox{.}}{Ng
  et~al\mbox{.}}{2000}]%
        {ng2000algorithms}
\bibfield{author}{\bibinfo{person}{Andrew~Y Ng}, \bibinfo{person}{Stuart~J
  Russell}, {et~al\mbox{.}}} \bibinfo{year}{2000}\natexlab{}.
\newblock \showarticletitle{Algorithms for inverse reinforcement learning.}. In
  \bibinfo{booktitle}{\emph{Icml}}, Vol.~\bibinfo{volume}{1}.
  \bibinfo{pages}{2}.
\newblock


\bibitem[\protect\citeauthoryear{Nijkamp, Hill, Zhu, and Wu}{Nijkamp
  et~al\mbox{.}}{2019}]%
        {nijkamp2019learn}
\bibfield{author}{\bibinfo{person}{Erik Nijkamp}, \bibinfo{person}{Mitch Hill},
  \bibinfo{person}{Song-Chun Zhu}, {and} \bibinfo{person}{Ying~Nian Wu}.}
  \bibinfo{year}{2019}\natexlab{}.
\newblock \bibinfo{title}{Learning Non-Convergent Non-Persistent Short-Run MCMC
  Toward Energy-Based Model}.
\newblock
\newblock
\showeprint[arxiv]{1904.09770}~[stat.ML]


\bibitem[\protect\citeauthoryear{Nowozin, Cseke, and Tomioka}{Nowozin
  et~al\mbox{.}}{2016}]%
        {nowozin2016f}
\bibfield{author}{\bibinfo{person}{Sebastian Nowozin}, \bibinfo{person}{Botond
  Cseke}, {and} \bibinfo{person}{Ryota Tomioka}.}
  \bibinfo{year}{2016}\natexlab{}.
\newblock \showarticletitle{f-gan: Training generative neural samplers using
  variational divergence minimization}. In \bibinfo{booktitle}{\emph{Advances
  in neural information processing systems}}. \bibinfo{pages}{271--279}.
\newblock


\bibitem[\protect\citeauthoryear{Pomerleau}{Pomerleau}{1991}]%
        {pomerleau1991efficient}
\bibfield{author}{\bibinfo{person}{Dean~A Pomerleau}.}
  \bibinfo{year}{1991}\natexlab{}.
\newblock \showarticletitle{Efficient Training of Artificial Neural Networks
  for Autonomous Navigation}.
\newblock \bibinfo{journal}{\emph{Neural Computation}} \bibinfo{volume}{3},
  \bibinfo{number}{1} (\bibinfo{year}{1991}), \bibinfo{pages}{88--97}.
\newblock


\bibitem[\protect\citeauthoryear{Reddy, Dragan, and Levine}{Reddy
  et~al\mbox{.}}{2019}]%
        {reddy2019softqil}
\bibfield{author}{\bibinfo{person}{Siddharth Reddy}, \bibinfo{person}{Anca~D.
  Dragan}, {and} \bibinfo{person}{Sergey Levine}.}
  \bibinfo{year}{2019}\natexlab{}.
\newblock \showarticletitle{{SQIL:} Imitation Learning via Regularized
  Behavioral Cloning}.
\newblock \bibinfo{journal}{\emph{CoRR}}  \bibinfo{volume}{abs/1905.11108}
  (\bibinfo{year}{2019}).
\newblock
\showeprint[arxiv]{1905.11108}
\urldef\tempurl%
\url{http://arxiv.org/abs/1905.11108}
\showURL{%
\tempurl}


\bibitem[\protect\citeauthoryear{Ross and Bagnell}{Ross and Bagnell}{2010}]%
        {ross2010efficient}
\bibfield{author}{\bibinfo{person}{St{\'e}phane Ross} {and}
  \bibinfo{person}{Drew Bagnell}.} \bibinfo{year}{2010}\natexlab{}.
\newblock \showarticletitle{Efficient reductions for imitation learning}. In
  \bibinfo{booktitle}{\emph{Proceedings of the thirteenth international
  conference on artificial intelligence and statistics}}.
  \bibinfo{pages}{661--668}.
\newblock


\bibitem[\protect\citeauthoryear{Ross, Gordon, and Bagnell}{Ross
  et~al\mbox{.}}{2011}]%
        {ross2011reduction}
\bibfield{author}{\bibinfo{person}{St{\'e}phane Ross},
  \bibinfo{person}{Geoffrey Gordon}, {and} \bibinfo{person}{Drew Bagnell}.}
  \bibinfo{year}{2011}\natexlab{}.
\newblock \showarticletitle{A reduction of imitation learning and structured
  prediction to no-regret online learning}. In
  \bibinfo{booktitle}{\emph{Proceedings of the fourteenth international
  conference on artificial intelligence and statistics}}.
  \bibinfo{pages}{627--635}.
\newblock


\bibitem[\protect\citeauthoryear{Sallans and Hinton}{Sallans and
  Hinton}{2004}]%
        {sallans2004valueEBM}
\bibfield{author}{\bibinfo{person}{Brian Sallans} {and}
  \bibinfo{person}{Geoffrey~E Hinton}.} \bibinfo{year}{2004}\natexlab{}.
\newblock \showarticletitle{Reinforcement learning with factored states and
  actions}.
\newblock \bibinfo{journal}{\emph{Journal of Machine Learning Research}}
  \bibinfo{volume}{5}, \bibinfo{number}{Aug} (\bibinfo{year}{2004}),
  \bibinfo{pages}{1063--1088}.
\newblock


\bibitem[\protect\citeauthoryear{Saremi and Hyvarinen}{Saremi and
  Hyvarinen}{2019}]%
        {saremi2019neural}
\bibfield{author}{\bibinfo{person}{Saeed Saremi} {and} \bibinfo{person}{Aapo
  Hyvarinen}.} \bibinfo{year}{2019}\natexlab{}.
\newblock \showarticletitle{Neural Empirical Bayes}.
\newblock \bibinfo{journal}{\emph{arXiv preprint arXiv:1903.02334}}
  (\bibinfo{year}{2019}).
\newblock


\bibitem[\protect\citeauthoryear{Saremi, Mehrjou, Sch{\"o}lkopf, and
  Hyv{\"a}rinen}{Saremi et~al\mbox{.}}{2018}]%
        {saremi2018deep}
\bibfield{author}{\bibinfo{person}{Saeed Saremi}, \bibinfo{person}{Arash
  Mehrjou}, \bibinfo{person}{Bernhard Sch{\"o}lkopf}, {and}
  \bibinfo{person}{Aapo Hyv{\"a}rinen}.} \bibinfo{year}{2018}\natexlab{}.
\newblock \showarticletitle{Deep energy estimator networks}.
\newblock \bibinfo{journal}{\emph{arXiv preprint arXiv:1805.08306}}
  (\bibinfo{year}{2018}).
\newblock


\bibitem[\protect\citeauthoryear{Schulman, Levine, Abbeel, Jordan, and
  Moritz}{Schulman et~al\mbox{.}}{2015}]%
        {schulman2015trust}
\bibfield{author}{\bibinfo{person}{John Schulman}, \bibinfo{person}{Sergey
  Levine}, \bibinfo{person}{Pieter Abbeel}, \bibinfo{person}{Michael Jordan},
  {and} \bibinfo{person}{Philipp Moritz}.} \bibinfo{year}{2015}\natexlab{}.
\newblock \showarticletitle{Trust region policy optimization}. In
  \bibinfo{booktitle}{\emph{International conference on machine learning}}.
  \bibinfo{pages}{1889--1897}.
\newblock


\bibitem[\protect\citeauthoryear{Schulman, Wolski, Dhariwal, Radford, and
  Klimov}{Schulman et~al\mbox{.}}{2017}]%
        {schulman2017proximal}
\bibfield{author}{\bibinfo{person}{John Schulman}, \bibinfo{person}{Filip
  Wolski}, \bibinfo{person}{Prafulla Dhariwal}, \bibinfo{person}{Alec Radford},
  {and} \bibinfo{person}{Oleg Klimov}.} \bibinfo{year}{2017}\natexlab{}.
\newblock \showarticletitle{Proximal policy optimization algorithms}.
\newblock \bibinfo{journal}{\emph{arXiv preprint arXiv:1707.06347}}
  (\bibinfo{year}{2017}).
\newblock


\bibitem[\protect\citeauthoryear{Song, Garg, Shi, and Ermon}{Song
  et~al\mbox{.}}{2019}]%
        {song2019sliced}
\bibfield{author}{\bibinfo{person}{Yang Song}, \bibinfo{person}{Sahaj Garg},
  \bibinfo{person}{Jiaxin Shi}, {and} \bibinfo{person}{Stefano Ermon}.}
  \bibinfo{year}{2019}\natexlab{}.
\newblock \showarticletitle{Sliced score matching: A scalable approach to
  density and score estimation}.
\newblock \bibinfo{journal}{\emph{arXiv preprint arXiv:1905.07088}}
  (\bibinfo{year}{2019}).
\newblock


\bibitem[\protect\citeauthoryear{Song, Garg, Shi, and Ermon}{Song
  et~al\mbox{.}}{2020}]%
        {song2020sliced}
\bibfield{author}{\bibinfo{person}{Yang Song}, \bibinfo{person}{Sahaj Garg},
  \bibinfo{person}{Jiaxin Shi}, {and} \bibinfo{person}{Stefano Ermon}.}
  \bibinfo{year}{2020}\natexlab{}.
\newblock \showarticletitle{Sliced score matching: A scalable approach to
  density and score estimation}. In \bibinfo{booktitle}{\emph{Uncertainty in
  Artificial Intelligence}}. PMLR, \bibinfo{pages}{574--584}.
\newblock


\bibitem[\protect\citeauthoryear{Sun, Vemula, Boots, and Bagnell}{Sun
  et~al\mbox{.}}{2019}]%
        {sun2019provably}
\bibfield{author}{\bibinfo{person}{Wen Sun}, \bibinfo{person}{Anirudh Vemula},
  \bibinfo{person}{Byron Boots}, {and} \bibinfo{person}{J~Andrew Bagnell}.}
  \bibinfo{year}{2019}\natexlab{}.
\newblock \showarticletitle{Provably efficient imitation learning from
  observation alone}.
\newblock \bibinfo{journal}{\emph{arXiv preprint arXiv:1905.10948}}
  (\bibinfo{year}{2019}).
\newblock


\bibitem[\protect\citeauthoryear{Vincent}{Vincent}{2011}]%
        {vincent2011connection}
\bibfield{author}{\bibinfo{person}{Pascal Vincent}.}
  \bibinfo{year}{2011}\natexlab{}.
\newblock \showarticletitle{A connection between score matching and denoising
  autoencoders}.
\newblock \bibinfo{journal}{\emph{Neural computation}} \bibinfo{volume}{23},
  \bibinfo{number}{7} (\bibinfo{year}{2011}), \bibinfo{pages}{1661--1674}.
\newblock


\bibitem[\protect\citeauthoryear{Wang, Ciliberto, Amadori, and Demiris}{Wang
  et~al\mbox{.}}{2019}]%
        {wang2019red}
\bibfield{author}{\bibinfo{person}{Ruohan Wang}, \bibinfo{person}{Carlo
  Ciliberto}, \bibinfo{person}{Pierluigi~Vito Amadori}, {and}
  \bibinfo{person}{Yiannis Demiris}.} \bibinfo{year}{2019}\natexlab{}.
\newblock \showarticletitle{Random Expert Distillation: Imitation Learning via
  Expert Policy Support Estimation}.
\newblock \bibinfo{journal}{\emph{CoRR}}  \bibinfo{volume}{abs/1905.06750}
  (\bibinfo{year}{2019}).
\newblock
\showeprint[arxiv]{1905.06750}
\urldef\tempurl%
\url{http://arxiv.org/abs/1905.06750}
\showURL{%
\tempurl}


\bibitem[\protect\citeauthoryear{Zhao, Mathieu, and LeCun}{Zhao
  et~al\mbox{.}}{2016}]%
        {zhao2016energy}
\bibfield{author}{\bibinfo{person}{Junbo Zhao}, \bibinfo{person}{Michael
  Mathieu}, {and} \bibinfo{person}{Yann LeCun}.}
  \bibinfo{year}{2016}\natexlab{}.
\newblock \showarticletitle{Energy-based generative adversarial network}.
\newblock \bibinfo{journal}{\emph{arXiv preprint arXiv:1609.03126}}
  (\bibinfo{year}{2016}).
\newblock


\bibitem[\protect\citeauthoryear{Ziebart}{Ziebart}{2010}]%
        {ziebart2010modeling}
\bibfield{author}{\bibinfo{person}{Brian~D Ziebart}.}
  \bibinfo{year}{2010}\natexlab{}.
\newblock \emph{\bibinfo{title}{Modeling purposeful adaptive behavior with the
  principle of maximum causal entropy}}.
\newblock \bibinfo{thesistype}{Ph.D. Dissertation}. \bibinfo{school}{figshare}.
\newblock


\bibitem[\protect\citeauthoryear{Ziebart, Maas, Bagnell, and Dey}{Ziebart
  et~al\mbox{.}}{2008}]%
        {ziebart2008maximum}
\bibfield{author}{\bibinfo{person}{Brian~D Ziebart}, \bibinfo{person}{Andrew~L
  Maas}, \bibinfo{person}{J~Andrew Bagnell}, {and} \bibinfo{person}{Anind~K
  Dey}.} \bibinfo{year}{2008}\natexlab{}.
\newblock \showarticletitle{Maximum entropy inverse reinforcement learning.}.
  In \bibinfo{booktitle}{\emph{Aaai}}, Vol.~\bibinfo{volume}{8}. Chicago, IL,
  USA, \bibinfo{pages}{1433--1438}.
\newblock


\end{thebibliography}

\clearpage
\onecolumn
\newpage
\appendix

\section{Proofs}
\subsection{Trivial Algebraic Deviations}
\label{ap:eb-il}
In \se{sec:ebil} we show that with an EBM we can have $\kld ( \rho_\pi \| \rho_{\piE}) = \bbE_\pi \left[E_\piE(s,a)\right ] - H(\pi) + \text{const}$, which can be manipulated with trivial deviations. Before showing the equivalence, we first present the following lemma which shows the definition of the entropy of the occupancy measure $\rho(s,a)$.
\begin{lemma}[Lemma 3 of \cite{ho2016generative}]
   $\overline{H}$ is strictly concave, and for all $\pi \in \Pi$ and $\rho \in \caD$, we have $H(\pi) = \overline{H}(\rho_\pi)$ and $\overline{H}(\rho)=H(\pi_\rho)$~,
where $\overline{H}\left (\rho \right )=-\sum_{s,a}\rho_\pi\log{\rho_\pi(s,a)/\sum_{a'}\rho_\pi(s,a')}$ is the entropy of the occupancy measure.
\end{lemma}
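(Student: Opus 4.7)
The plan is to dispatch the two identities first, since they are direct consequences of the occupancy-measure factorization already recorded in the Preliminaries, and then tackle strict concavity of $\overline{H}$, which is the substantive part of the claim.

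For the identities, I would start from the factorization $\rho_\pi(s,a)=\pi(a|s)\rho_\pi^s(s)$, which immediately gives $\rho_\pi(s,a)/\sum_{a'}\rho_\pi(s,a')=\pi(a|s)$. Substituting into the definition of $\overline{H}$ yields
\[
\overline{H}(\rho_\pi) = -\sum_{s,a}\rho_\pi(s,a)\log\pi(a|s) = \bbE_\pi[-\log\pi(a|s)] = H(\pi).
\]
The reverse identity $\overline{H}(\rho)=H(\pi_\rho)$ then falls out of \lm{lm:occupancy}: since $\rho$ is the occupancy measure of $\pi_\rho$, applying what has just been proved with $\pi:=\pi_\rho$ gives $\overline{H}(\rho)=\overline{H}(\rho_{\pi_\rho})=H(\pi_\rho)$.

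For strict concavity, I would rewrite $\overline{H}$ as a weighted sum of state-wise Shannon entropies: with $\rho^s(s):=\sum_{a'}\rho(s,a')$ and $\eta(p):=-\sum_a p(a)\log p(a)$,
\[
\overline{H}(\rho) = \sum_s \rho^s(s)\,\eta\bigl(\pi_\rho(\cdot|s)\bigr).
\]
Given distinct $\rho_1,\rho_2\in\caD$ and $\lambda\in(0,1)$, set $\bar\rho=\lambda\rho_1+(1-\lambda)\rho_2$. At each state $s$ with $\bar\rho^s(s)>0$, the conditional $\pi_{\bar\rho}(\cdot|s)$ is a convex combination of $\pi_{\rho_1}(\cdot|s)$ and $\pi_{\rho_2}(\cdot|s)$ with weights proportional to $\lambda\rho_1^s(s)$ and $(1-\lambda)\rho_2^s(s)$. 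Applying concavity of $\eta$ pointwise, multiplying through by $\bar\rho^s(s)$, and summing over $s$ delivers concavity of $\overline{H}$. Strictness I would then obtain by contradiction: if equality held throughout, strict concavity of $\eta$ would force $\pi_{\rho_1}(\cdot|s)=\pi_{\rho_2}(\cdot|s)$ on every state with $\rho_1^s(s),\rho_2^s(s)>0$, and \lm{lm:occupancy} would then force $\rho_1=\rho_2$.

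The main obstacle is handling states whose marginals vanish under one $\rho_i$ but not the other, where the pointwise Shannon-entropy inequality degenerates and supplies no information. I expect to resolve this by leaning on the one-to-one correspondence of \lm{lm:occupancy} once more: the marginals $\rho_i^s$ of a valid occupancy measure are themselves determined by the induced policy through the Bellman flow, so any discrepancy between $\rho_1^s$ and $\rho_2^s$ must already be reflected in a discrepancy between the conditionals on some jointly-supported state, which is precisely where the strict-concavity half of the Shannon entropy produces the required strict inequality.
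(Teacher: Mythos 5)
The paper does not actually prove this lemma; it imports it verbatim as Lemma~3 of Ho and Ermon (2016), so there is no in-paper argument to compare against. Judged on its own terms, your proposal is essentially correct. The two identities are handled exactly as one would expect: $\rho_\pi(s,a)/\sum_{a'}\rho_\pi(s,a')=\pi(a|s)$ on visited states, substitution gives $\overline{H}(\rho_\pi)=\bbE_\pi[-\log\pi(a|s)]=H(\pi)$, and the reverse identity is an immediate corollary of \lm{lm:occupancy}. Your concavity argument via the decomposition $\overline{H}(\rho)=\sum_s\rho^s(s)\,\eta(\pi_\rho(\cdot|s))$ is valid: the conditional at $s$ under $\bar\rho=\lambda\rho_1+(1-\lambda)\rho_2$ is indeed a convex combination with weights $\lambda\rho_1^s(s)/\bar\rho^s(s)$ and $(1-\lambda)\rho_2^s(s)/\bar\rho^s(s)$, and multiplying the pointwise entropy inequality by $\bar\rho^s(s)$ and summing gives concavity (convexity of $\caD$ follows from the affine Bellman flow constraints). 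This differs from the standard route, which writes $\overline{H}$ as a sum of terms $-x\log(x/y)$ and invokes joint convexity of the perspective function $(x,y)\mapsto x\log(x/y)$; note that this function is \emph{not} strictly convex (it is positively homogeneous), so the standard route faces exactly the same residual issue you identify, namely that equality only forces the conditionals to agree on jointly supported states. Your proposed resolution is the right one, but it is currently a sketch: to close it you should run the induction over time explicitly --- both occupancy measures share $\rho_0$, so the time-$0$ supports coincide and are jointly supported; agreement of the conditionals there propagates equality of the time-$(t+1)$ state marginals and their supports, whence $\rho_1^s=\rho_2^s$ and, with equal conditionals, $\rho_1=\rho_2$. (Under the paper's own definition of $\caD$ in Section~3.1 as the set of strictly positive occupancy measures, the vanishing-marginal case you worry about cannot occur and strictness is immediate from strict concavity of $\eta$ at any state where the conditionals differ.)
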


Take \eq{eq:ebm} into \eq{eq:kl-il} for policy $\piE$, one can obtain that: 
\begin{equation}\label{eq:kl-induce}
    \begin{aligned}
        \kld ( \rho_\pi \| \rho_{\piE}) &= \sum_{s,a} \rho_\pi(s,a) \log \frac{\rho_\pi(s,a)}{\rho_\piE(s,a)}\\
        &= \sum_{s,a}\rho_\pi\left( \log{\rho_\pi(s,a)} - \log{\frac{e^{-E_\piE(s,a)}}{(1-\gamma)Z'}} \right )\\
        &= \sum_{s,a}\rho_\pi\left(E_\piE(s,a)+\log{\rho_\pi(s,a)} + \log{(1-\gamma)Z'}\right )\\
        &= \bbE_\pi \left[E_\piE(s,a)\right ]+\sum_{s,a}\rho_\pi\log{\rho_\pi(s,a)} + \text{const}\\
        &= \bbE_\pi \left[E_\piE(s,a)\right ]+\sum_{s,a}\rho_\pi\left (\log{\rho_\pi(s,a)} 
        -\log{\sum_{a'}\rho_\pi(s,a')} + \log{\sum_{a'}\rho_\pi(s,a')}\right ) + \text{const}\\
        &= \bbE_\pi \left[E_\piE(s,a)\right ] +\sum_{s,a}\rho_\pi\log{\left [ \rho_\pi(s,a)/\sum_{a'}\rho_\pi(s,a')\right ]} + \sum_{s}\rho_\pi(s)\log{ \rho_\pi(s)} + \text{const}\\
        &= \bbE_\pi \left[E_\piE(s,a)\right ] - \overline{H}\left (\rho_\pi \right ) - H(\rho_{\pi}(s)) + \text{const}
        \\
        &\leq \bbE_\pi \left[E_\piE(s,a)\right ] - H(\pi) + \text{const}
        ~,
    \end{aligned}
\end{equation}
%Since $(1-\gamma)$ is also a constant, therefore we can remove it and
where $E_\piE$ is the EBM of policy $\piE$ and $Z'$ is its partition function. Therefore \eq{eq:kl-il} in the end leads to the objective function of EBIL \eq{eq:eb-il}:
\begin{equation}\label{eq:kl-eq-ebil}
    \begin{aligned}
    \argmin_{\pi} \kld ( \rho_\pi \| \rho_{\piE}) \Rightarrow \argmax_{\pi} \bbE_\pi \left[-E_\piE(s,a)\right ] + H(\pi)
    \end{aligned}
\end{equation}

\subsection{Proof of \protect\prop{prop:tau-kl}}
\label{ap:tau-kl}

\begin{proof}[Proof of \protect\prop{prop:tau-kl}]
Suppose we have recovered the optimal reward function $\hr$, then we can derive the objective of the KL divergence between the two trajectories into the forward MaxEnt RL procedure.

With chain rule, the induced trajectory distribution $p(\tau)$ is given by
\begin{equation}
     p(\tau) = p(s_0)\prod_{t=0}^{T}P(s_{t+1}|s_t,a_t)\pi(a_t|s_t)~.
\end{equation}

Suppose the desired expert trajectory distribution $p(\tau_E)$ is given by
\begin{equation}
\begin{aligned}
p(\tau_E) &\propto p(s_0)\prod_{t=0}^{T}P(s_{t+1}|s_t,a_t) \exp(\hr^*(\tau))\\
&= p(s_0)\prod_{t=0}^{T}P(s_{t+1}|s_t,a_t) \exp(\sum_{t=0}^{T}\hr^*(s_t, a_t))~,
\end{aligned}
\end{equation}
now we will show that the following optimization problem is equivalent to a forward MaxEnt RL procedure given the optimal reward $\hr^*$:
\begin{equation}\label{eq:tau-kl-induce-1}
\begin{aligned}
    \kld(p(\tau)\|p(\tau_E))&= \sum_{\tau\sim\pi} p(\tau) \log \frac{p(\tau)}{p(\tau_E)}\\
        &= \sum_{\tau\sim\pi}p(\tau)\left( \log{p(\tau)} - \log{p(\tau_E)} \right )\\
        &= \bbE_{\tau\sim\pi} \left[ \log p(s_0) + \sum_{t=0}^T \left (\log P(s_{t+1}|s_t,a_t) + \log \pi(a_t|s_t) \right ) \right. \\
    &~~~~~ \left. - \log p(s_0) - \sum_{t=0}^T \left (\log P(s_{t+1}|s_t,a_t) + \hr^*(s_t, a_t)\right ) \right] + \text{const}\\
    &= -\bbE_{\tau \sim p(\tau)} \left[ \sum_{t=0}^T \hr^*(s_t, a_t) - \pi(a_t|s_t)) \right] + \text{const} \\
    &= -\sum_{t=0}^T \bbE_{(s_t, a_t) \sim \rho(s_t, a_t)} [\hr^*(s_t,a_t) - \log \pi(a_t|s_t)] + \text{const}~.
\end{aligned}
\end{equation}

Without loss of generality, we approximate the finite term $\sum_{t=0}^T \bbE_{(s_t, a_t)}$ with an infinite term $\bbE_{\pi}$ by the definition, and then we have
\begin{equation}\label{eq:tau-kl-induce-2}
\begin{aligned}
    \kld(p(\tau)\|p(\tau_E))
    &\approx -\bbE_{(s, a) \sim \rho(s, a)} [\hr^*(s,a) - \log \pi(a_t|s_t)] + \text{const}\\
    &= -\bbE_{\pi} [\hr^*(s,a) - \log \pi(a|s)] + \text{const}\\
    &= -\bbE_{\pi} [\hr^*(s,a)] + \bbE_{\pi}[\log \pi(a|s)] + \text{const}\\
    &= -\bbE_{\pi} [\hr^*(s,a)] - H(\pi) + \text{const}~.
\end{aligned}
\end{equation}

Thus minimizing the KL divergence between the two trajectories is equivalent to the following optimization problem:
\begin{equation}
    \max_{\pi} \bbE_\pi\left [\hr^*(s,a)\right ]+ H(\pi)~, 
\end{equation}
which is also exactly the objective of a forward MaxEnt RL.
\end{proof}

\section{Experiments}
\label{ap:exp}
\subsection{Hyperparameters}

We show the hyperparameters for both DEEN training and policy training on different tasks in \tb{tab:hyperparameters}. Specifically, we use MLPs as the networks for training DEEN and the policy network. We note that the quality of the energy model is rather important for training the RL agent. In our implementation, we find that DEEN is sensitive to the noise scale, which should be carefully considered according to the scale of the state action data.

\begin{table*}[htbp]
\caption{Important hyperparameters used in one-dimensional experiment}
\label{tab:hyperparameters}
\centering
\resizebox{0.3\textwidth}{!}{
\begin{tabular}{llc}
\toprule
& Hyperparameter & One-D.\\
\midrule
\multirow{4}{*}{Policy} & Hidden layers & 3  \\
& Hidden Size & 200 \\
& Iterations & 6000 \\
%& BC Iterations & - & - & - & - & - \\
& Batch Size & 32 \\
\midrule
\multirow{6}{*}{DEEN} & Hidden layers & 3 \\
& Hidden size & 200 \\
& Epochs & 3000 \\
& Batch Size & 32  \\
& Noise Scale $\sigma$ & 0.1 \\
& Reward Scale $\alpha$ & 1 \\
& Last Activation & \textit{tanh} \\
\bottomrule
\end{tabular}}
\end{table*}

\begin{table*}[htbp]
\caption{Important hyperparameters used in sub-optimal MuJoCo experiments}
\label{tab:hyperparameters}
\centering
\resizebox{0.55\textwidth}{!}{
\begin{tabular}{llccccc}
\toprule
& Hyperparameter & Human. & Hop. & Walk. & Swim. & Invert. \\
\midrule
\multirow{4}{*}{Policy} & Hidden layers & 3 & 3 & 3 & 3 & 3 \\
& Hidden Size & 200 & 200 & 200 & 200 & 200 \\
& Iterations & 6000 & 6000 & 6000 & 6000 & 6000 \\
%& BC Iterations & - & - & - & - & - \\
& Batch Size & 32 & 32 & 32 & 32 & 32 \\
\midrule
\multirow{6}{*}{DEEN} & Hidden layers & 3 & 3 & 4 & 3 & 3 \\
& Hidden size & 200 & 200 & 200 & 200 & 200 \\
& Epochs & 3000 & 6000 & 500 & 1900  & 500 \\
& Batch Size & 32 & 32 & 32 & 32 & 32 \\
& Noise Scale $\sigma$ & 0.1 & 0.1 & 0.1 & 0.1 & 0.1 \\
& Reward Scale $\alpha$ & 1 & 5 & 1 & 1 & 1000 \\
& Last Activation & \textit{tanh} & \textit{tanh} & \textit{tanh} & \textit{tanh} & \textit{tanh}\\
\bottomrule
\end{tabular}}
\end{table*}

\begin{table*}[htbp]
\caption{Important hyperparameters used in optimal MuJoCo experiments}
\label{tab:hyperparameters}
\centering
\resizebox{0.45\textwidth}{!}{
\begin{tabular}{llcccccc}
\toprule
& Hyperparameter & Lunar. & Hop. & Walk. \\
\midrule
\multirow{4}{*}{Policy} & Hidden layers & 2 & 2 & 2 \\
& Hidden Size & 128 & 128 & 128 \\
%& BC Iterations & - & - & - & - & - \\
& Batch Size & 64 & 64 & 64 \\
\midrule
\multirow{6}{*}{DEEN} & Hidden layers & 5 & 5 & 3 \\
& Hidden size & 512 & 512 & 256\\
& Epochs & 75000 & 75000 & 75000 \\
& Batch Size & 256 & 256 & 256 \\
& Noise Scale $\sigma$ & 0.05 & 0.01 & 0.1 \\
& Reward Scale $\alpha$ & 16 & 64 & 64 \\
& Last Activation & \textit{sigmoid} & \textit{sigmoid} & \textit{sigmoid}\\
\bottomrule
\end{tabular}}
\end{table*}

\subsection{Synthetic Task Training Procedure}

We demonstrate more training slices of the synthetic task in this section. 

We analyze the learned behaviors during the training procedure of the synthetic task, as illustrated by visitation heatmaps in \fig{fig:train-heat}. For each method, we choose to show four training stages from different training iterations. These figures provide more evidence that although GAIL can finally achieve good results, EBIL provides fast and stable training. By contrast, GMMIL and RED fail to achieve effective results during the whole training time.\footnote{For better understanding how these methods learn reward signals, we also visualize the changes of estimated rewards during the training procedures. Videos can be seen at \url{https://www.dropbox.com/s/0mrsoqyu040crdo/video.zip?dl=0}.}

\begin{figure*}[htb]
\centering
\includegraphics[width=0.90\linewidth]{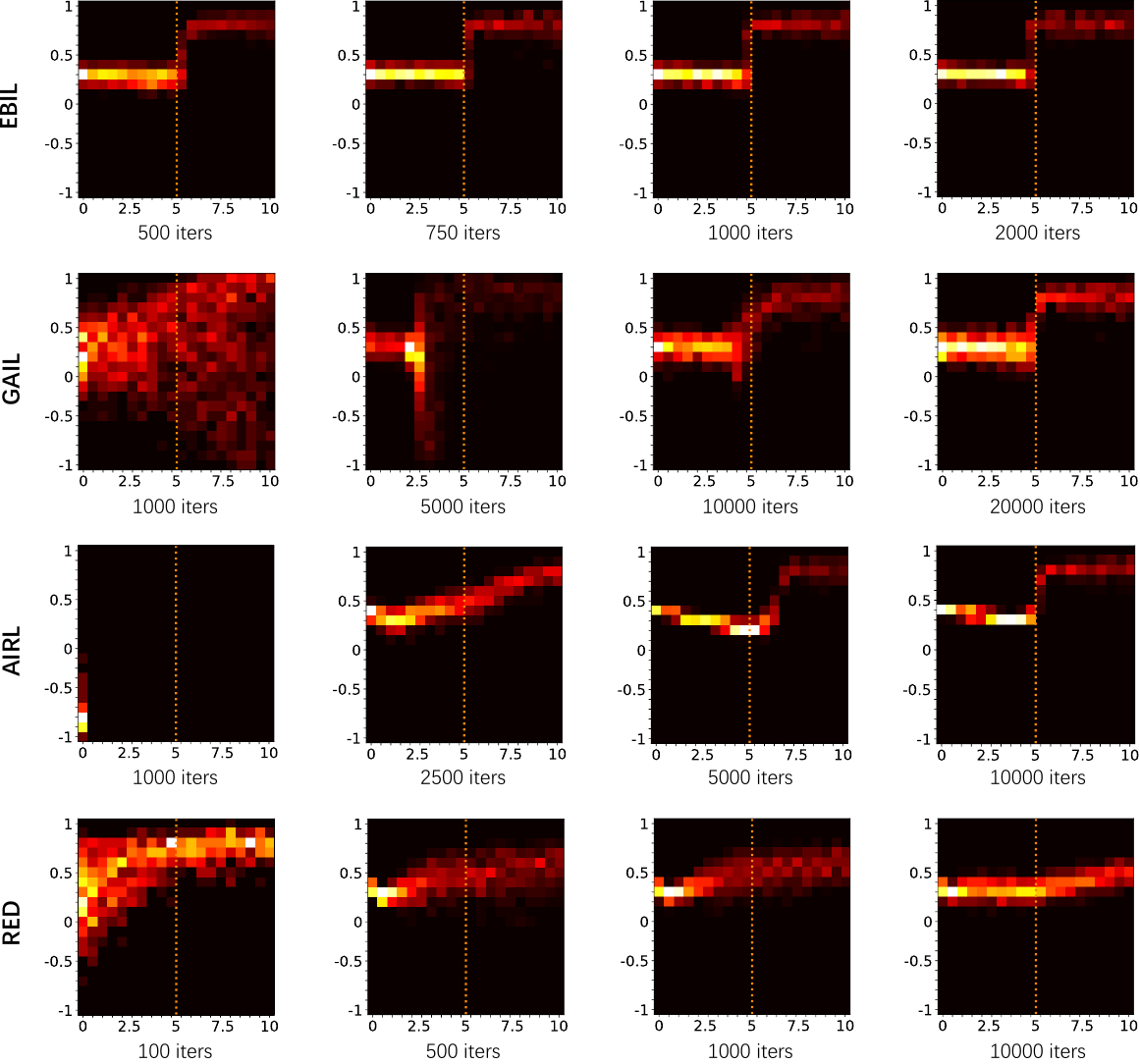}

%\vspace{-3pt}
\caption{The induced policy during policy training procedures. In each figure the horizontal axis denotes the \textit{state space}, and the vertical axis represents the \textit{action space}. Methods from top to bottom are separately EBIL, GAIL, AIRL and RED and each one contains four training stages shown in one line. The color bar is the same as \fig{fig:heat_maps}. The brighter the yellow color, the higher the visitation frequency.}
\vspace{-15pt}
\label{fig:train-heat}
\end{figure*}

\subsection{Energy Evaluation}
\label{curves-energy}

EBIL relies highly on the training of EBMs which provide the reward to imitate the expert's policy. Therefore we have to evaluate the quality of a learned EBM before training the agent. However, the loss function of DEEN is not an intuitive indicator for evaluating the learned energy network, therefore, we propose to evaluate the averaged energy value for expert trajectories and the random trajectories on different tasks. As shown in \fig{fig:curves-energy}, DEEN finally converges in all experiments by successfully differentiating the expert data from the random one. However, such a metric is a basic requirement and is always not that helpful on improving the performance of imitation learning, since the main difficulty comes from providing accurate reward signals for the trajectories that close to the experts. 

%We further analyze the training performance with energy models of different epochs in ablation study (\se{sec:abl-study}).

\begin{figure*}[htbp]

\centering
\vspace{-40pt}
\includegraphics[width=0.9\linewidth]{./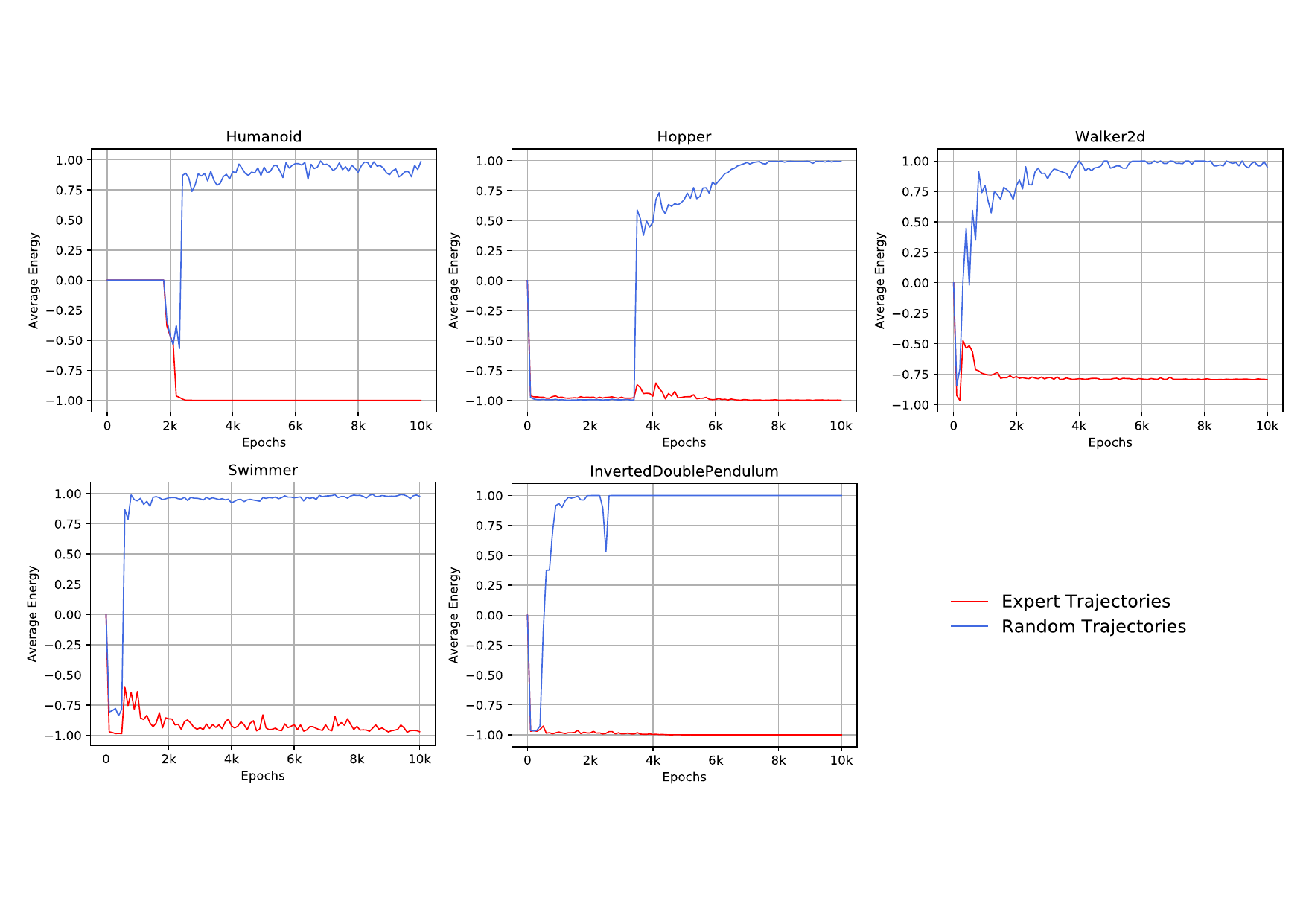}
\vspace{-60pt}
\caption{Energy evaluation curves on different MuJoCo tasks trained by sub-optimal demonstrations , where the red line represents for the average energy estimation on expert data and the blue is for random trajectories, which contain 100 trajectories separately. Note that lower energy values correspond to higher rewards. The energy value is in a range of $[-1,1]$ since we use \textit{tanh} for the last layer of the DEEN network on this task.}
\vspace{-3pt}
\label{fig:curves-energy}
\end{figure*}

\iffalse
\subsection{Training Curves}
\label{ap:curves-mujoco}

We plot the episodes averaged return during the training procedure on sub-optimal demonstrations in \fig{fig:curves-mujoco}, where EBIL shows effective guidance to help agent learn good policies while owns stability on all environments in comparison to the other methods. In our experiments, we find that the training procedure of GAIL and AIRL suffers from instability.

\begin{figure*}[h!]
\centering
\vspace{-45pt}
\includegraphics[width=1\linewidth]{./}
\vspace{-45pt}
\caption{Training curves of GAIL, GMMIL, RED and EBIL on different continuous control benchmarking tasks, where the solid curves depict the mean and the shaded areas indicate the standard deviation. Each iterations contains 1024 environment timesteps. The solid line and the shade represent the mean and the standard deviation on different runs separately.}
\vspace{-3pt}
\label{fig:curves-mujoco}
\end{figure*}
\fi

\section{Further Discussions}

\subsection{Surrogate Reward Functions}

As discussed in \cite{kostrikov2018discriminator}, the reward function is highly related to the property of the task. Positive rewards may achieve better performance in the ``surviving'' style environment, and negative ones may take advantage in the ``per-step-penalty'' style environment. The different choices are common in those imitation learning works based on GAIL, which can use either $\log(D)$ or $-\log(1-D)$, where
$D\in [0,1]$ is the output of the discriminator, determined by the final ``\textit{sigmoid}'' layer. In our work, we choose ``\textit{tanh}'' but also use ``\textit{sigmoid}'' as the final layer of the energy network, which in result leads the energy into a range of $[-1,1]$ or $[0,1]$. In order to adapt to different environments while holding the good property of the energy, we can apply a monotonically increasing linear function $h$ as the surrogate reward function, which makes translation or scaling transformation on the energy outputs. It appears that in all of our tasks, the original energy signal does not show much ascendancy, and thus we choose different $h$ for these tasks.

In the one-dimension domain experiment, we choose ``\textit{tanh}'' as the final layer of the energy network, to use the following surrogate reward function:
\begin{equation}
    \hr(s,a)=h(x)=x+1~,
\end{equation}
where $\hr \in [0, 2]$ and $x=-E(s,a)$ is the energy function. Thus, the experts' state-action pair will get close-to-zero rewards at each step.

In sub-optimal MuJoCo tasks, we also choose ``\textit{tanh}'' as the final layer and construct the surrogate reward function as:
\begin{equation}
    \hr(s,a)=h(x)=(x+1)/2~.
\end{equation}
In optimal MuJoCo tasks, we choose ``\textit{sigmoid}'' as the final layer and construct the surrogate reward function as:
\begin{equation}
    \hr(s,a)=h(x)=x+1~,
\end{equation}
Note that in these tasks we make a normalized reward $\hr \in [0, 1]$ so that the non-expert's state-action pairs will gain near-zero rewards while the experts' get close-to-one rewards at each step regarding the output range of the energy is $[-1,1]$. In our experiments, similar rewards as the one-dimensional synthetic environment can also work well.

\subsection{AIRL Does Not Recover the Energy}
\label{ap:airl}

Adversarial Inverse Reinforcement Learning (AIRL)\cite{fu2017learning}, is a SoTA IRL method that apply an adversarial architecture similar as GAIL to solve the IRL problem. Formally, AIRL constructs the discriminator as 
\begin{equation}\label{eq:airl-dis1}
D(s,a) = \frac{\exp(f(s,a))}{\exp(f(s,a)) + \pi(a|s)}~.
\end{equation}
This is motivated by the former GCL-GAN work~\cite{finn2016connection}, which proposes that one can apply GAN to train GCL that formulate the discriminator as
\begin{equation}\label{eq:airl-dis2}
D(\tau) = \frac{\frac{1}{Z}\exp(c(\tau))}{\frac{1}{Z}\exp(c(\tau)) + \pi(\tau)}~,
\end{equation}
where $\tau$ denotes the trajectory. AIRL uses a surrogate reward
\begin{equation}\label{eq:airl-reward}
\begin{aligned}
r(s,a) &= \log D(s,a) - \log(1-D(s,a))\\
&= f(s,a)-\log{\pi(a|s)}~,
\end{aligned}
\end{equation}
which can be seen as an entropy-regularized reward function.

However, the difference between \eq{eq:airl-dis1} and \eq{eq:airl-dis2} indicates that AIRL does not actually recover the expert's energy since they do not separate the partition function $Z$ from $f$. Also, the learning signal that drives the agent to learn a good policy is not a pure reward term but contains an entropy term itself. We visualize the different reward choice ($f(s,a)$ or $f(s,a)-\log{\pi(a|s)}$) in \fig{fig:airlreward} as comparison, which indicates the influence of the entropy term, and verifies our intuition that AIRL in fact does not recover the expert's energy as EBIL does, but the recovered reward $f(s,a)$ can be seen as an approximation of energy.

\begin{figure*}[!t]
\centering
\subfigure[Reward as $f(s,a)-\log{\pi(a|s)}$]{
\begin{minipage}[b]{0.33\linewidth}
\label{fig:logreward}
\centering
\includegraphics[width=0.7\linewidth]{figs/heat-crop/airl_heat_40_16000_1-crop.pdf}
\end{minipage}
}
\subfigure[Reward as $f(s,a)$]{
\begin{minipage}[b]{0.393\linewidth}
\label{fig:freward}
\centering
\includegraphics[width=0.7\linewidth]{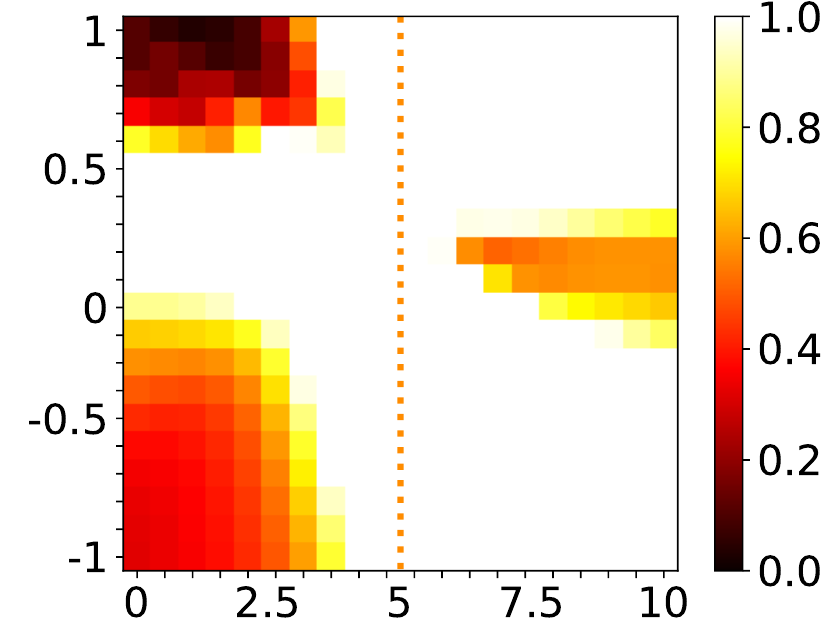}
\end{minipage}
}
\vspace{-7pt}
\caption{Heat maps of the different estimated rewards recovered by AIRL.}
\label{fig:airlreward}
\end{figure*}

\subsection{Discussions with MaxEnt RL Methods}
\label{ap:ebil-sac}

Soft-Q Learning (SQL)~\cite{haarnoja2017reinforcement} and Soft Actor-Critic (SAC)~\cite{haarnoja2018soft} are two main approaches of MaxEnt RL, particularly, they propose to use a general energy-based form policy as:
\begin{equation}\label{eq:softq}
    \pi(a_t|s_t) \propto \exp{(-E(s_t, a_t))}~.
\end{equation}
To connect the policy with soft versions of value functions and Q functions, they set the energy model $E(s_t,a_t) = -\frac{1}{\alpha}Q_{\text{soft}}(s_t,a_t)$ where $\alpha$ is the temperature parameter, such that the policy can be represented with the Q function which holds the highest probability at the action with the highest Q value, which essentially provides a soft version of the greedy policy. Thus, one can choose to optimize the soft Q function to obtain the optimal policy by minimizing the expected KL-divergence:
\begin{equation}\label{eq:softac}
    J(\pi) = \bbE_{s\sim \rho^s_{\caD}}\left[\kld\left( \pi(\cdot|s) \big \| \frac{\exp{(Q(s,\cdot))}}{Z(s)} \right) \right ]~,
\end{equation}
where $\rho^s_{\caD}$ is the distribution of previously sampled states and actions, or a replay buffer. Therefore, the second term in the KL-divergence in fact can be regarded as the target or the reference for the policy.

Consider to use the KL-divergence as the distance metric in the general objective of IL shown in \eq{eq:il}, then we get:
% with taking a logarithm on each policy
\begin{equation}
\begin{aligned}
    \pi^* &= \argmin_\pi \mathbb{E}_{\pi} \left [\kld \left( \pi(\cdot|s) \big \| \piE(\cdot|s)  \right) \right ]~.
\end{aligned}
\end{equation}
If we choose to model the expert policy using the energy form of \eq{eq:softq} then we get:
\begin{equation}
\begin{aligned}\label{eq:softil}
    \pi^* &= \argmin_\pi \mathbb{E}_{\pi}\left [\kld \left( \pi(\cdot|s) \big \| \frac{\exp{(-E_\piE(s,a)})}{Z} \right) \right ]~.
\end{aligned}
\end{equation}

\begin{proposition}
\label{prop:ebil-sac}
    The IL objective shown in \eq{eq:softil} is equivalent to the EBIL objective shown in \eq{eq:eb-il}.
\end{proposition}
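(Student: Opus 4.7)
The plan is to prove \prop{prop:ebil-sac} by direct algebraic manipulation: expand the inner KL divergence in \eqref{eq:softil}, peel off the terms that do not depend on $\pi$, and recognize what remains as the EBIL objective up to a sign flip (min vs.\ max).

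First, I would fix a state $s$ and expand the KL divergence pointwise. Using the definition of KL divergence and the energy-based form of the target, we get
\begin{equation*}
\kld\!\left(\pi(\cdot|s)\,\Big\|\,\tfrac{\exp(-E_\piE(s,\cdot))}{Z}\right)
= \sum_{a} \pi(a|s)\bigl[\log\pi(a|s) + E_\piE(s,a) + \log Z\bigr].
\end{equation*}
This splits into three pieces: $\log Z$ (a constant independent of $\pi$), the negative conditional entropy $-H\bigl(\pi(\cdot|s)\bigr)$, and $\mathbb{E}_{a\sim\pi(\cdot|s)}[E_\piE(s,a)]$.

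Next, I would take the outer expectation $\mathbb{E}_{s\sim\rho_\pi^s}[\cdot]$ (which is how $\mathbb{E}_\pi[\cdot]$ is interpreted over states) and use the identity $\mathbb{E}_\pi[f(s,a)] = \mathbb{E}_{s\sim\rho_\pi^s}\mathbb{E}_{a\sim\pi(\cdot|s)}[f(s,a)]$ together with the fact that $H(\pi) = \mathbb{E}_{s\sim\rho_\pi^s}\bigl[-\sum_a \pi(a|s)\log\pi(a|s)\bigr]$ (the $\gamma$-discounted causal entropy). This yields
\begin{equation*}
\mathbb{E}_{s\sim\rho_\pi^s}\!\left[\kld\!\left(\pi(\cdot|s)\,\Big\|\,\tfrac{\exp(-E_\piE(s,\cdot))}{Z}\right)\right]
= \mathbb{E}_\pi[E_\piE(s,a)] - H(\pi) + \log Z.
\end{equation*}
Since $\log Z$ does not depend on $\pi$, dropping it and flipping the sign turns the minimization problem \eqref{eq:softil} into $\max_\pi \mathbb{E}_\pi[-E_\piE(s,a)] + H(\pi)$, which is exactly \eqref{eq:eb-il}.

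I do not expect a real obstacle here: the argument is a one-step unfolding of the KL definition and isolation of the partition function. The only subtle point worth mentioning explicitly is the interpretation of the outer expectation $\mathbb{E}_\pi$ in \eqref{eq:softil} as $\mathbb{E}_{s\sim\rho_\pi^s}$, so that recombining with the conditional expectation over $a$ produces the full $\mathbb{E}_\pi[E_\piE(s,a)]$ and the causal entropy $H(\pi)$ of the paper's notation. Everything else is bookkeeping.
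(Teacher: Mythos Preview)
Your proposal is correct, but it takes a different route from the paper. The paper's proof argues at the level of optimizers: it invokes \prop{prop:kl-il} and \prop{prop:eb-il} to conclude that \eqref{eq:eb-il} has $\pi^*=\piE$ as its unique solution, then asserts (without calculation) that \eqref{eq:softil} also has $\pi^*=\piE$, so the two objectives are ``equivalent'' in the sense of sharing the same argmin. Your argument is more direct and strictly more informative: by expanding the inner KL and peeling off the $\log Z$ term, you show the two objectives coincide as functions of $\pi$ up to an additive constant, which in particular supplies the ``easy to see'' step the paper skips. One minor bookkeeping note: since $\rho_\pi^s$ is an unnormalized occupancy measure with total mass $1/(1-\gamma)$, the constant you isolate is really $(\log Z)/(1-\gamma)$ rather than $\log Z$; this is still $\pi$-independent, so the conclusion is unaffected.
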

\begin{proof}
    Since \eq{eq:eb-il} is equivalent to \eq{eq:kl-il}, it holds the optimal solution such that $\pi^*=\piE$. Also, it is easy to see that \eq{eq:softil} has the same optimal solution such that $\pi^*=\piE$.
\end{proof}
Thus, \prop{prop:ebil-sac} reveals the relation between MaxEnt RL and EBIL. Specifically, EBIL employs the energy model learned from expert demonstrations as the target policy. The difference is that MaxEnt RL methods use the Q function to play the role of the energy function, construct it as the target policy, and iteratively update the Q function and the policy, while EBIL directly utilizes the energy function to model the expert occupancy measure and constructs the target policy. 
\iffalse
As a result, it makes sense to directly optimize the policy by taking the energy model as the target policy instead of the reward function, which leads to the optimal solution as:

\begin{equation}
    \begin{aligned}
        \pi^*(a|s)&= \frac{\rho_{\piE}(s,a)}{\sum_{a'}\rho_{\piE}(s,a')}\\
        &=\frac{\frac{1}{Z}\exp(-E_{\piE}(s,a))}{\frac{1}{Z}\sum_{a'}\exp{(-E_{\piE}(s,a'))}}\\
        &=\frac{\exp(-E(s,a))}{\sum_{a'}\exp{(-E(s,a'))}}~.
    \end{aligned}
\end{equation}
Therefore, the optimal solution can also be obtained through estimating the energy function and summing it over the action space, which may be intractable for high-dimensional or continuous action space. Nevertheless, this can be solved in simple scenarios.
\fi

\end{document}